\author{%
  Evelyn Ma
    \\
  University of Illinois at Urbana-Champaign\\
  \texttt{pingm@illinois.edu} \\
  \And
  Chao Pan \\
  University of Illinois at Urbana-Champaign \\
  \texttt{chaopan2@illinois.edu} \\
  \AND
  Rasoul Etesami \\
  University of Illinois at Urbana-Champaign \\
  \texttt{etesami1@illinois.edu} \\
  \And
  Han Zhao \\
  University of Illinois at Urbana-Champaign \\
  \texttt{hanzhao@illinois.edu} \\
  \And
  Olgica Milenkovic \\
  University of Illinois at Urbana-Champaign \\
  \texttt{milenkov@illinois.edu} \\
}
\newcommand{\blue}[1]{\textcolor{black}{#1}}
\begin{document}

\title{FedGTST: Boosting Global Transferability of Federated Models via Statistics Tuning}

\maketitle


\begin{abstract}
  The performance of Transfer Learning (TL) significantly depends on effective pretraining, which not only requires extensive amounts of data but also substantial computational resources. As a result, in practice, it is challenging to successfully perform TL at the level of individual model developers.
  Federated Learning (FL) addresses these challenges by enabling collaborations among individual clients through an indirect expansion of the available dataset, distribution of the computational burden across different entities, and privacy-preserving communication mechanisms.
  Despite several attempts to devise effective transferable FL approaches, several important issues remain unsolved.
  First, existing methods primarily focus on optimizing transferability within local client domains, thereby ignoring transferability over the global learning domain. Second, most approaches focus on analyzing indirect transferability metrics, which does not allow for accurate assessment of the final target loss and the degree of transferability.
  To address these issues, we introduce two important FL features into the model. The first boosts transferability via an exchange protocol between the clients and the server that includes information about cross-client Jacobian (gradient) norms. The second feature promotes an increase of the average of the Jacobians of the clients at the server side, which is subsequently used as a local regularizer that reduces the cross-client Jacobian variance.
  A rigorous analysis of our transferable federated algorithm, termed {FedGTST} (Federated Global Transferability via Statistics Tuning), reveals that increasing the averaged Jacobian norm across clients and reducing the Jacobian variance ensures tight control of the target loss. This insight leads to an upper bound on the target loss of transferable FL in terms of the source loss and source-target domain discrepancy.
  Extensive experimental results on datasets including MNIST $\to$ MNIST-M and CIFAR10 $\to$ SVHN suggest that FedGTST significantly outperforms other relevant baselines, such as FedSR.
  For example, on the second source-target dataset pair, we improve the accuracy of FedSR by $9.8\%$ and that of FedIIR by $7.6\%$ when the backbone used is LeNet.
\end{abstract}

\section{Introduction}
\label{sec:intro}
\emph{Transfer Learning} (TL) has received significant interest in the machine learning community due to its ability to extract representative features from source tasks and use them to improve the generalization capability on related target domain problems~\cite{yosinski2014transferable, weiss2016survey}. In addition to boosting the performance of a target domain model, TL also reduces the computational cost of fine-tuning the target domain model. Nevertheless, effective source pretraining in TL is practically challenging for individual model developers because it requires access to large datasets as well as significant computational resources~\cite{pan2010survey}.
To resolve this problem, one can leverage \emph{Federated Learning} (FL), which refers to decentralized learning protocols used in mobile and IoT devices~\cite{konevcny2016federated}. FL not only increases access to multiple datasets in a decentralized manner and alleviates the computational burden of individual clients, but it also protects the privacy of local data ~\cite{mcmahan2017communication}. As a result, a number of recent works have outlined methods for transferable FL, including FedADG (Federated Adversarial Domain Generalization)~\cite{zhang2021federated}, FedCDG (Federated Contrastive Domain Generalization)~\cite{yu2023contrastive}, FedSR (Federated Simple Representations)~\cite{nguyen2022fedsr}, FedIIR (Federated Implicit Invariant Relationships)~\cite{guo2023out}, FedCCST (Federated Cross-Client Style Transfer)~\cite{chen2023federated} and StableFDG (Stable Federated Domain Generalization)~\cite{park2024stablefdg}.
Despite the promising initial findings provided by the aforementioned techniques, several combinations of important issues remain unsolved across the spectrum of methods.

With respect to \textbf{privacy leakage}, the drawabacks of a collection of the methods are as follows:
\textbf{1.} FedADG forces each client source domain to align its representation distribution with that of the target domain, and therefore violates data privacy because source domains are given access to the target domain in order to perform the alignment; \textbf{2.} FedCCST boosts global transferability by increasing local diversity to avoid local overfitting. It therefore requires clients to share their local representations with each other and this information is subsequently used for local data augmentation. This is a direct violation of FL  privacy constraints; \textbf{3.}
StableFDG expands local data diversity by sharing style statistics (i.e., representations, including means and variances). This clearly leads to leakage of local privacy-sensitive information.

With respect to \textbf{local overfitting}, the shortcomings of a group of the aforementioned methods are as follows: \textbf{1.} FedSR learns a simple representation through regularization during  local training, by exploiting the similarity between the representation and the data, given the labels. However, since the regularized local training relies completely on local structures (i.e., local models, representations, labels, data), it leads to overfitting of local distributions, and thus has limited capability to learn cross-client invariant features, which is key for global transferability; \textbf{2.}  FedCDG uses a contrastive local regularizer on representations generated by various samples within the same class. This leads to overfitting in the local domain since no cross-client information is exploited.

With respect to \textbf{communication complexity}, we observe that: \textbf{1.} FedIIR is  suboptimal. Although it  mitigates the problem of privacy violation and avoids local overfitting by adding a local regularizer capturing the distance between the local gradient and the global gradient, it requires communicating gradients between the clients and the server and therefore doubles the communication cost compared to baselines (additionally, FedIIR performs well for a large number of clients, but offers average performance when this number is small); \textbf{2.} Similar communication complexity problems are faced by FedCCST and StableFDG, which rely on communicating styles (i.e., representations).

Finally, prior works mostly \textbf{lack explicit theoretical analyses of global transferability:} they do not tend to quantify the performance/loss  of the pretrained model fine-tuned on the target domain.

In summary, the most important unresolved problem with known transferable FL models (with the exception of FedIIR) is that they use centralized TL approaches during local training, and do not fully exploit features specific to federated learning (for details, see also the discussion in Section~\ref{sec:relatedwork}).

\textbf{Our contributions.} We describe what is, to the best of our knowledge, the first approach to federated transfer learning termed \emph{Federated Global Transferability via Statistics Tuning (FedGTST)} that simultaneously alleviates the above issues faced by existing methods. Our main contributions can be summarized as follows. 

\begin{itemize}
    \item[\textbf{1.}] We use a new regularizer that encodes cross-client statistics and forces the local training process to tune the global statistics in a ``direction'' that improves global transferability rather than just local transferability. This is achieved through subtractions of global norms of Jacobians (gradients) communicated by the server. 
    \item[\textbf{2.}] We suggest to only communicate scalars, more precisely, \emph{Jacobian norms}, which introduces a negligible communication overhead in the overall model exchange protocol.
    \item[\textbf{3.}] We ensure that our communication schemes do not allow uncontrolled access to data and thereby ensure data privacy.
    \item[\textbf{4.}] We rigorously prove that even though only small discrepancies among local gradients may exist upon regularization, transferability can be low as regularization can impede the growth of the gradient norm. To boost the Jacobian norm, we implement specialized protocols at both the client and server levels. Finally, we establish relevant bounds on the transferability loss for this setting. 
\end{itemize}


The main technical insights provided by our analysis are as follows. Two FL-specific factors, a small \textit{cross-client Jacobian variance} and larger \textit{cross-client Jacobian norm} are indicative of good transferability. These factors are \emph{direct} performance indicators, unlike \emph{indirect factors} (e.g., feature invariance) which only suggest improved transferability. Our findings are based on the first \textit{direct measure} of transferability, which equals the loss on the target domain incurred by the pretrained federated model. The FL-specific factors govern the bounds on the loss and therefore allow one to control them for better transferability. We validate these findings through extensive experiments which show that FedGTST outperforms methods such as FedSR and FedIIR by as much as $10\%$.

\section{Related Work}
\label{sec:relatedwork}
\vspace{-0.1in}
\textbf{FL} is a machine learning paradigm in which multiple entities collaborate to train a global model without sharing their local data (see ~\cite{kairouz2019advances} for a comprehensive review). FL has gained significant attention due to its potential to address privacy concerns while enabling large-scale collaborative learning ~\cite{smith2017federated}. 
Relevant to this work,~\cite{mcmahan2017communication} proposed the Federated Averaging (FedAvg) algorithm, which aggregates model updates from multiple client devices to train a global model. Another relevant line of work~\cite{yang2019federated} introduced FedProx, a federated optimization algorithm that incorporates proximal terms to handle non-iid data distributions. FL methods nevertheless still face several challenges. One challenge is dealing with highly heterogeneous local datasets ~\cite{li2020federated}
\blue{, for which the recent work FedImpro\cite{TangZS0L0024} proposed leveraging  aggregated feature distributions to address client drift.} 
Another challenge is the communication overhead incurred during the aggregation of model updates ~\cite{bonawitz2019towards}. Minimizing communication costs while ensuring convergence and data privacy remain active topics of research in FL. 
Also, many FL solutions primarily emphasize performance in the client domain without considering the performance of the model on unseen domains. 


\textbf{TL} is a powerful machine learning technique that allows models to leverage knowledge gained from one task to improve performance on another related task ~\cite{pan2010survey}. TL has been widely adopted in various domains such as computer vision, natural language processing, and speech recognition, where labeled data may be scarce or expensive to acquire~\cite{torrey2010transfer,weiss2016survey}. A common approach in TL involves fine-tuning a pre-trained model on a target task using a small amount of labeled data, which often leads to improved generalization and faster convergence compared to training from scratch ~\cite{yosinski2014transferable}. Recent works in TL have focused on developing more effective  algorithms, such as domain adaptation methods that address the discrepancy between the source and target domains ~\cite{ganin2016domain}. Additionally, TL techniques have been used  to handle tasks with limited amounts of labeled data through techniques like semi-supervised and self-supervised learning ~\cite{ruder2019survey}. TL still faces challenges such as negative transfer, where information from the source task actually degrades performance on the target task; and, it requires careful selecting of appropriate pretrained models and transfer strategies for specific tasks and domains ~\cite{pan2010survey}. 
Current TL methods often require that one entity possesses knowledge of all data, violating the privacy requirements of FL. 
\blue{Moreover, we comment on \textit{Gradient Matching in TL} in Appendix~\ref{subsec:apdx-lit-grad-match}}

\textbf{Transferable Federated Learning} (TFL) is an emerging research area at the intersection of FL and TL. One of earliest contributions to the field, \textit{FedADG}~\cite{zhang2021federated}, encourages the transferablity of FL through adversarial local training. However, the work does not provide theoretical guarantees, and existing studies~\cite{xu2022adversarially} indicate that adversarial robustness does not necessarily lead to better transferability. 
Other methods, such as \textit{FedSR}~\cite{nguyen2022fedsr} and \textit{FedCDG}~\cite{yu2023contrastive}, enhance transferability by adapting standard representation learning from a single-agent to a federated setting; they do not incorporate FL-specific features (i.e., instructions provided by the server, cross-client model properties etc). 
Note that although \textit{FedSR} has successfully included centralized invariant feature learning into FL, it uses centralized methods locally and then shares information with the global model, and thereby does not fully exploiting FL capabilities. Thus, using FedSR, each client can learn very different representations that are hard to aggregate at the central server. More precisely, FedSR does not communicating information that can help improve the transferability of the global model. Among all the previously discussed methods (FedADG, FedCDG,  FedSR, FedIIR,  FedCCST,  and StableFDG), FedIIR is the closest to our approach and may be seen as a special case of our method which has better performance, smaller communication complexity and comes with provable global transferability guarantees. \blue{Furthermore, we discuss the distinctions and connections between TFL and Generalization of FL, a topic potentially relevant to TFL, in Appendix~\ref{subsec:apdx-lit-GFL}.}

\section{Preliminaries}
\label{sec:prelim}
\blue{\textbf{General Supervised Learning Settings}}. We denote the data space by $\gX$, the feature space by $\gZ$, and the label space by $\gY$.
A model $h:\gX\to\gY$ typically takes the form $h = g\circ f$,
where $f: \gX \to \gZ$ is a feature extractor and $g:\gZ \to \gY$ is a classifier.
Denote the function class for the entire model, the feature extractor and the classifier by $\gH, \gF, \gG$, respectively, so that $h\in \gH, f\in \gF, g\in \gG$.
Denote the weights of model $\psi \in \{f,g,h\}$ as $w_\psi$.
Given a loss function $l:\gY \times \gY \rightarrow \sR$ and a domain distribution $\gD$ over $\gX \times \gY$,
the population loss $L_{\gD}(h)$ is defined as
\begin{align}
    \label{eq:population_loss}
    L_{\gD}(h):= \sE_{(x,y) \sim\gD}\;l\left(h(x),y\right).
\end{align}

\blue{\textbf{General Framework of TFL}. In TL, the two typical learning phases are: a) pretraining on the source domain; and, b) finetuning on the target domain. In the context of TFL, pretraining is conducted via FL over source (local) domains, while the global model is trained and then finetuned on the target domain during the second phase. In both phases, supervised learning is performed with full access to the labels. More details are provided next.} 

\blue{\textbf{Pretraining Phase in TFL: FL on Source (Local) Domains}.}
The source domain is a composition of the agents' local domains, $\{\gD^{(k)}\}$, with $k\in [K]$ denoting the client index and $K$ representing the total number of clients. The source loss is defined as the standard federated loss on the source domains.
\vspace{-0.2cm}
\begin{align}
    \label{obj:src_loss}
    L_{src}(g\circ f) & := \frac{1}{K}\sum_{k=1}^K L_{\gD^{(k)}}(g\circ f). 
\end{align}
Let $h^* = g^* \circ f^*$ be an optimal \blue{global} solution for the objective~(\ref{obj:src_loss}). In FL approaches, the problem solution is the result of the central server's aggregation of local models into a global one. We denote the local solutions involved in creating the optimal global solution $\psi^*$ ($\psi \in \{f,g,h\}$) by $\{\psi^{*(k)}\}$; through averaging aggregation, we obtain the optimal global weights  $w^*_\psi = \frac{1}{K}\sum_k w^{*(k)}_\psi$.

\blue{\textbf{Finetuning Phase in TFL: Supervised Finetuning on the Target Domain.} Upon obtaining the optimal pretrained global solution $h^* = g^* \circ f^*$, the pretrained feature extractor $f^*$ is fixed and applied to the target domain $\gD_T$. 
The target loss is defined as the loss on the target domain $\gD_T$, i.e.,}
\begin{align}
    \label{obj:tgt_loss}
    L_{tgt}(g\circ f^*) & := L_{\gD_T}(g\circ f^*).
\end{align}
\blue{Through finetuning,} a new classifier $g^*_T  :=\arg\inf_{g\in \gG} L_{tgt}(g\circ f^{*})$ is determined by minimizing the target objective~(\ref{obj:tgt_loss}).

\blue{\textbf{Transferability Assessment}}.
With a slight abuse of notation, we define the optimal target loss as 
$$
L_{tgt}^* := L_{tgt}(g^*_T \circ f^*).
$$ 
\blue{We formally define the \textit{measure of transferability of TFL} as the optimal target loss \( L_{tgt}^* \), as it \emph{directly} reflects the performance of a transferable model on the target domain. A smaller \( L_{tgt}^* \), or a tighter bound on it, indicates better transferability.}



\section{Theoretical Bounds on the Target Loss}
\label{sec:theory}
\subsection{A General Bound Based on Discrepancy/Divergence}
We start with Definitions~\ref{def:GF_dvg} and ~\ref{def:cc-div} borrowed from existing TL studies that characterize the domain discrepancy, and then propose a new domain divergence tailored to Transferable FL (TFL), including the cross-client discrepancy in Definition~\ref{def:intra-dist} and the source-target discrepancy in~Definition \ref{def:fed_GF_dvg}.

\begin{definition}[$\gG, \gF$-discrepancy~\cite{xu2022adversarially}]
    \label{def:GF_dvg}
    Given a classifier class $\gG$, a feature extractor class $\gF$, the source domain $\gD_S$ and the target domain $\gD_T$, with a slight abuse of notation, the $(\gG, \gF)$-discrepancy is defined as
    \begin{align}
        \label{eq:gf_divergence}
        d_{\gG, \gF}(\gD_S, \gD_T):=
        \sup_{f\in\gF}
        \bigg|\inf_{g\in\gG} L_{\gD^{(k)}}(g \circ f) -
        \inf_{g\in\gG}L_{\gD_T}(g \circ f)\bigg|.
    \end{align}
\end{definition}
{\remark The $\gG,\gF$-discrepancy 
has also been used in the analysis of domain adaptation~\cite{xu2022adversarially}.}

We next adapt the $\gH$-discrepancy to measure the cross-client domain discrepancy in transferable FL (Definition ~\ref{def:cc-div}), and tailor the $\gG, \gF$-discrepancy (Definition ~\ref{def:GF_dvg}) to measure the source--target discrepancy in TFL (Definition ~\ref{def:fed_GF_dvg}).

\begin{definition}[Cross-Client Divengence for TFL]
    \label{def:cc-div}
    Given a model class $\gH$ and federated local domains $\gD_S^{fed} = \{\gD^{(k)}\}_{k\in[K]}$, with $d_{\gH}(\cdot,\cdot)$ defined as Definition \ref{def:h_divergence}, the intra-client discrepancy is defined as
    \label{def:intra-dist}
    \begin{align}
        \overline{d}_{\gH}(\gD_S^{fed}):= \frac{1}{{K(K-1)}}\sum_{k_1\neq k_2} d_{\gH}\left(\gD_S^{(k_1)}, \gD_S^{(k_2)}\right).
    \end{align}
\end{definition}
Note that $\overline{d}_{\gH}(\gD_S)$ equals the average of $\gH$-discrepancies over all local domain pairs and therefore measures the intra-discrepancy on the non-iid distributed source domains. When source domains are iid across clients, we have $\overline{d}_{\gH}(\gD_S)=0$.

\begin{definition}[Source-Target Discrepancy for TFL]
    \label{def:fed_GF_dvg}
    Given a classifier class $\gG$, a feature extractor class $\gF$, federated local domains $\gD_S^{fed} = \{\gD^{(k)}\}_{k\in[K]}$, and the target domain $\gD_T$, with slight abuse of notation, the federated $(\gG, \gF)$-discrepancy is defined as
    \begin{align}
        \label{eq:tfl_gh_divergence}
        d_{\gG, \gF}(\gD_S^{fed}, \gD_T):=
        \frac{1}{K} \sum_{k\in[K]} d_{\gG,\gF}(\gD^{(k)}, \gD_T).
    \end{align}
\end{definition}
Based on the two TFL-specific domain discrepancy definitions (i.e., Definition ~\ref{def:cc-div} and ~\ref{def:fed_GF_dvg}), we derive a general bound on the TFL loss in Theorem \ref{thrm:gen-ub}.
The TFL-specific source-target discrepancy (Definition ~\ref{def:fed_GF_dvg}) is further used in Theorem~\ref{thrm:ub-lr} which presents a bound on the target loss using cross-client statistics. For our theoretical analyses, we need the following common assumptions.

\begin{assumption}[Convexity and Smoothness]
    \label{asp:conv-smooth}
    We assume that the loss function $l$ satisfies two conditions: (1) $l$ is convex w.r.t. $w_{h}$; (2) $l$ is Lipschitz smooth for $w_{h}$ with a constant $\alpha>0$.
\end{assumption}

{\remark  Assumption ~\ref{asp:conv-smooth} is easy to meet in practice, and it arises  in many linear models (linear regression, SVM etc). }

\begin{theorem}[Bound Based on TFL-specific Domain Discrepancy]
    \label{thrm:gen-ub}
    Under Assumptions~\ref{asp:conv-smooth} (Convexity and Smoothness), the optimal target loss is bounded as
    \begin{align}
        \label{eq:gen-bound}
        L^*_{tgt} \leq \frac{1}{K}\sum_{k=1}^K\left[L_{\gD^{(k)}}\left(h^{*(k)}\right)\right] + \overline{d}_{\gH}(\gD_S^{fed}) + d_{\gG,\gF}(\gD_S^{fed}, \gD_T),
    \end{align}
    where $h^{*(k)}$ denotes the optimal local model of client $k$ (see Section ~\ref{sec:prelim}).
\end{theorem}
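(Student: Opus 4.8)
The plan is to peel off the two discrepancy terms one at a time, moving from the target domain back to the per-client optimal local models; the only nontrivial ingredient is the way the averaging aggregation interacts with convexity. First I would apply Definitions~\ref{def:fed_GF_dvg} and~\ref{def:GF_dvg} with the fixed pretrained feature extractor $f^*\in\gF$: since $f^*$ is feasible for the supremum defining $d_{\gG,\gF}(\gD^{(k)},\gD_T)$, for every client $k$ we get
\[
  L^*_{tgt}=\inf_{g\in\gG}L_{\gD_T}(g\circ f^*)\le \inf_{g\in\gG}L_{\gD^{(k)}}(g\circ f^*)+d_{\gG,\gF}(\gD^{(k)},\gD_T).
\]
Averaging over $k\in[K]$ and invoking Definition~\ref{def:fed_GF_dvg} yields
\[
  L^*_{tgt}\le \frac1K\sum_{k=1}^K\inf_{g\in\gG}L_{\gD^{(k)}}(g\circ f^*)+d_{\gG,\gF}(\gD_S^{fed},\gD_T).
\]

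Next I would bound $\tfrac1K\sum_k\inf_{g}L_{\gD^{(k)}}(g\circ f^*)$ by the average of the local training losses plus the cross-client term. Bounding the infimum by the value at the aggregated classifier $g^*$ gives $\inf_{g}L_{\gD^{(k)}}(g\circ f^*)\le L_{\gD^{(k)}}(g^*\circ f^*)=L_{\gD^{(k)}}(h^*)$. Since the weight vector $w_h$ is the concatenation of the feature-extractor and classifier weights, the FedAvg-aggregated model satisfies $w_{h^*}=\tfrac1K\sum_{k'}w_{h^{*(k')}}$, so by Assumption~\ref{asp:conv-smooth} ($l$, and hence $L_{\gD^{(k)}}$ after taking expectation, is convex in $w_h$) together with Jensen's inequality, $L_{\gD^{(k)}}(h^*)\le\tfrac1K\sum_{k'}L_{\gD^{(k)}}(h^{*(k')})$. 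Averaging over $k$ produces the double sum $\tfrac1{K^2}\sum_{k,k'}L_{\gD^{(k)}}(h^{*(k')})$.

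Then I would split the double sum into its diagonal and off-diagonal parts. The diagonal contributes $\tfrac1{K^2}\sum_k L_{\gD^{(k)}}(h^{*(k)})$. For each off-diagonal pair $k\neq k'$, the defining property of the $\gH$-divergence (Definition~\ref{def:h_divergence}) gives $L_{\gD^{(k)}}(h^{*(k')})\le L_{\gD^{(k')}}(h^{*(k')})+d_{\gH}(\gD^{(k)},\gD^{(k')})$; summing over the $K(K-1)$ ordered pairs and invoking Definition~\ref{def:cc-div} turns the off-diagonal part into $\tfrac{K-1}{K^2}\sum_{k'}L_{\gD^{(k')}}(h^{*(k')})+\tfrac{K-1}{K}\,\overline{d}_{\gH}(\gD_S^{fed})$. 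Adding the diagonal contribution, the local-loss coefficients combine as $\tfrac1{K^2}+\tfrac{K-1}{K^2}=\tfrac1K$, and using $\tfrac{K-1}{K}\le1$ we obtain $\tfrac1K\sum_k\inf_gL_{\gD^{(k)}}(g\circ f^*)\le\tfrac1K\sum_k L_{\gD^{(k)}}(h^{*(k)})+\overline{d}_{\gH}(\gD_S^{fed})$. Combining this with the displayed bound above gives~\eqref{eq:gen-bound}.

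The step I expect to be most delicate is the aggregation/convexity move: one must be explicit that $w_h=(w_f,w_g)$ so that the averaged global model $h^*=g^*\circ f^*$ genuinely has weights $\tfrac1K\sum_k w_{h^{*(k)}}$, and that convexity of the loss $l$ in $w_h$ transfers to the population loss $L_{\gD^{(k)}}$ by linearity of expectation before Jensen's inequality applies; note that the Lipschitz-smoothness half of Assumption~\ref{asp:conv-smooth} is not used here and is reserved for the sharper bound of Theorem~\ref{thrm:ub-lr}. A secondary point to check is that Definition~\ref{def:h_divergence} indeed yields $|L_{\gD_1}(h)-L_{\gD_2}(h)|\le d_{\gH}(\gD_1,\gD_2)$ uniformly over $h\in\gH$, which is exactly what the off-diagonal estimate requires.
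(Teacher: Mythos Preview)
Your proposal is correct and follows essentially the same route as the paper: first pass from the target to the per-client source losses via the $(\gG,\gF)$-discrepancy, then use convexity of $l$ in $w_h$ together with the averaging rule $w_{h^*}=\tfrac1K\sum_k w_{h^{*(k)}}$ to reduce to the double sum $\tfrac1{K^2}\sum_{k,k'}L_{\gD^{(k)}}(h^{*(k')})$, and finally control cross terms by $d_{\gH}$. The only cosmetic differences are that you keep the intermediate $\inf_g L_{\gD^{(k)}}(g\circ f^*)$ explicit before bounding it by $L_{\gD^{(k)}}(h^*)$, and you isolate the diagonal of the double sum to obtain the slightly sharper factor $\tfrac{K-1}{K}$ on $\overline{d}_{\gH}$ before relaxing to~$1$; the paper instead applies the $d_{\gH}$ bound to all pairs (diagonal terms vanish) and uses $\tfrac{1}{K^2}\le\tfrac{1}{K(K-1)}$ on the off-diagonal sum.
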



\textbf{Semantic Interpretation}. In Theorem \ref{thrm:gen-ub} , the optimal target loss $L^*_{tgt}$
is bounded by the sum of three terms on the RHS of Equation~\ref{eq:gen-bound}: (1) the averaged optimal local loss $\frac{1}{K}\sum_{k=1}^K\left[L_{\gD^{(k)}}\left(h^{*(k)}\right)\right]$; (2) the intra-discrepancy of the source domain $\overline{d}_{\gH}(\gD_S^{fed})$; (3) the discrepancy  between the source and target domains, $d_{\gG,\gF}(\gD_S^{fed}, \gD_T)$. Therefore, the bound on the optimal target loss can be tightened by making the optimal source loss smaller, and by lowering the intra-source  and source-target discrepancy. The later two terms can be controlled through regularization as detailed below.

\textbf{Tightening the Bound via Regularization over $\gF$.} Given two feature extractor function classes $\gF_1$ and $\gF_2$, if $\gF_1 \subset \gF_2$, we have $\gH_1 \subset \gH_2$ where $\gH_1 = \gG \times \gF_1$ and $\gH_2 = \gG \times \gF_2$.
From Definition~\ref{def:cc-div} and~\ref{def:fed_GF_dvg}, it is straightforward to see that $ \overline{d}_{\gH_1}(\gD_S^{fed}) \leq \overline{d}_{\gH_2}(\gD_S^{fed})$ and $d_{\gG, \gF_1}(\gD_S^{fed}, \gD_T) \leq d_{\gG, \gF_2}(\gD_S^{fed}, \gD_T)$. This indicates that any general regularizer on $\gF$ can lead to a decrease in the latter two terms of the RHS of Theorem~\ref{thrm:gen-ub}. However, shrinking the expressive power of $\gF$ will inevitably increase the optimal source loss, which is the first term on the RHS of the expression in Theorem~\ref{thrm:gen-ub}. Therefore, using general regularization, one has to trade-off the \textit{optimal source loss}, \textit{TFL-specific cross-client discrepancy}, and \textit{TFL-specific source-target discrepancy}.

Based on Theorem~\ref{thrm:gen-ub} and the follow-up discussion, we aim to answer the question: how should one design a \textit{practical regularizer} that can inherently tighten our bound on TFL? We give an  answer to this question in Section~\ref{subsec:bound-cross-client}. There, Theorem~\ref{thrm:ub-lr} shows that a transferability-boosting pretraining regularization should decrease the \textit{cross-client Jacobian variance} while at the same time increase the \textit{cross-client averaged Jacobian norm}.
\subsection{Practical Bounds Based on Cross-Client Statistics}
\label{subsec:bound-cross-client}

The goal of FL is to estimate the optimal solution $f^*$ by updating the global model through multiple federated rounds
\footnote{One federated round comprises local model initialization, local training, local model transmission, global model aggregation, and global model broadcasting}.
Denote the total number of rounds by $P$ and the output global model after round $P$ by $f_P$. Denote the target loss under $f_P$ instead of $f^*$ as $\widehat{L}^*_{tgt}$ (which is an estimate of $L^*_{tgt}$). At the end of local training during any round $p\leq P$, let $h^{(k)}_{p}$ with weight $w_p^{(k)}$ represent the model of client $k$, and let $h_{p}$ with weight $w_p$ represent the global model.
Denote the 
Jacobian (gradient) of the loss $l$ w.r.t the model weights at domain $k$ as $J^{(k)}(w)=\sE_{\gD_S^{(k)}} \nabla_{w_h} l(h(x),y)|_{w_h = w}$. 

Throughout the remainder of the manuscript, we use $J_p^{(k)}$ as shorthand for $J^{(k)}\left(w_{p}\right)$. Furthermore, we denote the learning rate of agent $k$ at round $p$ by $\lambda_p^{(k)}$. We make a single-step local update assumption (Assumption~\ref{asp:one-gd}) and use the definitions for cross-client statistics (Definition~\ref{def:cc-stat}) to derive bounds exploiting the cross-client statistics from Lemma~\ref{lem:ub-tgtloss} and Theorem~\ref{thrm:ub-lr}).

\begin{assumption}[Single-Step Local Update]
    \label{asp:one-gd}
    During local training, all clients perform one step of gradient descent (GD) to update their model for transmission,
    $
        w_{p+1}^{(k)} = w_{p} - \lambda^{(k)}_{p+1}\cdot J^{(k)}(w_p)
    $. This is a common assumption in the FL literature~\cite{mcmahan2017communication,ma2023local}. 
    \footnote{\blue{We briefly discuss the Stochastic Gradient Descent approach in Appendix~\ref{sec:apdx-sgd}.}}
\end{assumption}

\begin{definition}[Cross-Client Statistics]
    \label{def:cc-stat}
    At federated round $p$, given $K$ clients with local Jacobians $\{J_p^{(k)}\}_{k\in[K]}$, we define the \textit{cross-client averaged Jacobian norm} $\|J_p\|_2$ and the  \textit{cross-client Jacobian variance}, respectively, as
    \begin{align}
        \|J_p\|_2 = \left\|\frac{1}{K}\sum_{k} J^{(k)}_p\right\|_2,\ \text{ and }\
        \sigma^2_{p} = \frac{1}{K}\sum_{k}
        \left\|J^{(k)}_p\right\|_2^2 - \left\|\frac{1}{K}\sum_{k} J^{(k)}_p\right\|_2^2.
    \end{align}
\end{definition}
Note that we assumed the loss function to have a $\alpha$-Lipschitz continuous gradient. When the gradient is large, $\alpha$ is also large, and to make $\beta_1(\lambda)\geq 0$, $\lambda$ has to be close to $0$. In this case, the absolute value of the second term can be small and that of the third term can be large.
\begin{lemma}[Loss Bound Using Cross-Client Statistics]
    \label{lem:ub-tgtloss}
    Under Assumptions~\ref{asp:one-gd} and~\ref{asp:conv-smooth}, and the cross-client statistics defined in Definition ~\ref{def:cc-stat}, after $P$ rounds of federated pretraining one has
    \begin{align}
        \label{eq:Mrd-tdf-fed-ub}
        \widehat{L}^*_{tgt}\leq L_{src}\left(h_{0}\right)
        -\sum_{p=0}^{P-1}
        \beta_1({\lambda}_{p+1})
        \|J_p\|_2^2 + \sum_{p=0}^{P-1}
        \beta_2({\lambda}_{p+1})
        \sigma^2_{p} + d_{\gG, \gF}(\gD_S^{fed},\gD_T),
    \end{align}
    where $h_{0}$ is the initial global model and $\beta_2(\lambda) = \frac{\alpha{\lambda}^2}{2},\ \beta_1(\lambda) = {\lambda} - \beta_2(\lambda)$.
\end{lemma}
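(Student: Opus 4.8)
The plan is to split the bound into two independent pieces: (i) a \emph{discrepancy} step that replaces the target quantity $\widehat{L}^*_{tgt}$ by the source loss of the final global model $h_P = g_P\circ f_P$ plus $d_{\gG,\gF}(\gD_S^{fed},\gD_T)$ — this is essentially the argument of Theorem~\ref{thrm:gen-ub} but with the round-$P$ model in place of $h^*$ and without the cross-client term; and (ii) an \emph{optimization} step that tracks the source loss $L_{src}(h_p)$ across the $P$ federated rounds and telescopes down to $L_{src}(h_0)$, producing the two sums over $\beta_1(\lambda_{p+1})\|J_p\|_2^2$ and $\beta_2(\lambda_{p+1})\sigma^2_p$.

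For (i): apply Definition~\ref{def:GF_dvg} to each local domain $\gD^{(k)}$ and the target $\gD_T$. For any feature extractor $f$ and any classifier $g$ this yields $\inf_{g'}L_{\gD_T}(g'\circ f)\le \inf_{g'}L_{\gD^{(k)}}(g'\circ f) + d_{\gG,\gF}(\gD^{(k)},\gD_T)\le L_{\gD^{(k)}}(g\circ f) + d_{\gG,\gF}(\gD^{(k)},\gD_T)$. Choosing $f=f_P$, $g=g_P$, averaging over $k$, and invoking Definition~\ref{def:fed_GF_dvg} together with~(\ref{obj:src_loss}) gives $\widehat{L}^*_{tgt} = \inf_{g'}L_{\gD_T}(g'\circ f_P) \le L_{src}(h_P) + d_{\gG,\gF}(\gD_S^{fed},\gD_T)$.

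For (ii): it remains to show $L_{src}(h_P) \le L_{src}(h_0) - \sum_{p=0}^{P-1}\beta_1(\lambda_{p+1})\|J_p\|_2^2 + \sum_{p=0}^{P-1}\beta_2(\lambda_{p+1})\sigma^2_p$. Fix a round $p$. Under Assumption~\ref{asp:one-gd} with common step size $\lambda_{p+1}$ and averaging aggregation, the local iterates are $w_{p+1}^{(k)} = w_p - \lambda_{p+1}J_p^{(k)}$ and the global iterate is $w_{p+1} = \frac1K\sum_k w_{p+1}^{(k)} = w_p - \lambda_{p+1}\bar J_p$ with $\bar J_p := \frac1K\sum_k J_p^{(k)}$, so $\|\bar J_p\|_2 = \|J_p\|_2$. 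Since $L_{src} = \frac1K\sum_k L_{\gD^{(k)}}$ inherits convexity and $\alpha$-smoothness in the weights from $l$ (Assumption~\ref{asp:conv-smooth}, passing through the expectation), convexity (Jensen) first gives $L_{src}(h_{p+1}) \le \frac1K\sum_k L_{src}(h_{p+1}^{(k)})$, and then the descent lemma at base point $w_p$, using that the gradient of $L_{src}$ at $w_p$ equals $\bar J_p$, gives $L_{src}(h_{p+1}^{(k)}) \le L_{src}(h_p) - \lambda_{p+1}\langle \bar J_p, J_p^{(k)}\rangle + \frac{\alpha\lambda_{p+1}^2}{2}\|J_p^{(k)}\|_2^2$. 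Averaging over $k$ and using the bias--variance identity $\frac1K\sum_k\|J_p^{(k)}\|_2^2 = \|J_p\|_2^2 + \sigma^2_p$ from Definition~\ref{def:cc-stat}, one obtains $L_{src}(h_{p+1}) \le L_{src}(h_p) - \lambda_{p+1}\|J_p\|_2^2 + \frac{\alpha\lambda_{p+1}^2}{2}(\|J_p\|_2^2 + \sigma^2_p) = L_{src}(h_p) - \beta_1(\lambda_{p+1})\|J_p\|_2^2 + \beta_2(\lambda_{p+1})\sigma^2_p$. Summing over $p=0,\dots,P-1$ telescopes the $L_{src}(h_p)$ terms, and combining with (i) completes the proof.

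The step I expect to be the main obstacle is the per-round inequality in (ii): applying the descent lemma directly to the aggregated update $w_{p+1} = w_p - \lambda_{p+1}\bar J_p$ yields only $-\beta_1(\lambda_{p+1})\|J_p\|_2^2$ and \emph{no} variance term. The term $+\beta_2(\lambda_{p+1})\sigma^2_p$ materializes only if one first uses convexity to move from the aggregated model back to the mean of the pre-aggregation local models, and then invokes smoothness client-by-client, which forces $\frac1K\sum_k\|J_p^{(k)}\|_2^2$ (rather than $\|\bar J_p\|_2^2$) into the quadratic term — the gap between these two quantities being exactly $\sigma^2_p$. A secondary technical point is that the statement is phrased with a single round-$p$ learning rate $\lambda_{p+1}$, so the argument is carried out in that regime; with client-specific rates $\lambda_{p+1}^{(k)}$ the aggregate direction becomes a weighted average of the $J_p^{(k)}$ and the constants $\beta_1,\beta_2$ would change accordingly.
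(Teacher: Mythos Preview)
Your proposal is correct and follows essentially the same two-part strategy as the paper: first bound $\widehat{L}^*_{tgt}$ by $L_{src}(h_P)+d_{\gG,\gF}(\gD_S^{fed},\gD_T)$ via the $(\gG,\gF)$-discrepancy, then telescope $L_{src}(h_p)$ using a per-round inequality obtained from convexity (Jensen across the local models) followed by $\alpha$-smoothness (descent lemma) client-by-client. The only cosmetic difference is that the paper writes the double average $\frac{1}{K^2}\sum_{k_1,k_2}L_{\gD^{(k_1)}}(h_{p+1}^{(k_2)})$ and applies the descent lemma to each $L_{\gD^{(k_1)}}$ with gradient $J_p^{(k_1)}$, whereas you apply it directly to $L_{src}$ with gradient $\bar J_p$; since $\frac1K\sum_{k_1}\langle J_p^{(k_1)},\cdot\rangle=\langle \bar J_p,\cdot\rangle$, the two computations are line-for-line equivalent, and your observation that the $\sigma_p^2$ term appears precisely because smoothness is invoked \emph{before} aggregation matches the paper's derivation (and its explicit common-learning-rate hypothesis in Lemma~\ref{lem:round-ub}).
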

Note that \emph{during the training process}, a large Jacobian norm can promote transferability (the Jacobian is expected to be small at the end of training).

\textbf{Interpretation of Key Terms}. Lemma ~\ref{lem:ub-tgtloss} shows that the \textit{target loss} of the finetuned pretrained model (LHS) is bounded by a sum (RHS) involving four key terms:
1) $L_{src}(h_0)$, the \textit{initial source loss};
2) $\|J_p\|_2$, the \textit{cross-client averaged Jacobian norm};
3) $\sigma_p^2$, the \textit{cross-client Jacobian variance}; 4) $d_{\gG, \gF}(\gD_S^{fed},\gD_T)$, the \textit{TFL-specific source-target domain divergence}. Since $L_{src}(h_0)$ is fixed throughout the pretraining process, and $d_{\gG, \gF}(\gD_S^{fed},\gD_T)$ can be reduced using general regularization, we focus on analyzing the two tunable cross-client statistics, $\|J_p\|_2$ and $\sigma_p^2$.

\textbf{Influence of the Cross-Client Statistics on the Bound}. We note that during pretraining, both the coefficients $\beta_1(\lambda_{p+1})$ and $\beta_2(\lambda_{p+1})$ have to be positive (see how to ensure these  constraints in Appendix ~\ref{apdx:pstv-coef}). Therefore, a larger $\|J_p\|_2$ and a smaller $\sigma^2_p$ tighten the upper bound, indicate a lower target loss, and thus suggest better model transferability.

\textbf{Coefficients Quadratic w.r.t. the Learning Rates}. Lemma~\ref{lem:ub-tgtloss} involves coefficients $\beta_1(\lambda)$ and $\beta_2(\lambda)$ that are quadratic in the learning rate $\lambda$; thus, we can further tighten the bound by optimizing the learning rates across different rounds (Assumption~\ref{asp:opt-lr}). The tightened bound is given in Theorem~\ref{thrm:ub-lr}.

\begin{assumption}[Optimal Learning Rates Across Rounds]
    \label{asp:opt-lr}
    In each round $p$ ($2\leq p\leq P$), we use an optimal learning rate for local training
    $
        \lambda_{p}^* = \frac{K\cdot\|J_{p-1}\|_2^2}{\alpha \cdot \sum_k\|J_{p-1}^{(k)}\|_2^2}.
    $
    A similar analysis and assumptions on optimal learning rate has also been used in~\cite{haddadpour2019convergence,deng2020adaptive,cui2022optimal,wu2023faster,ma2023local}.
\end{assumption}


\begin{theorem}[Tightened Bound Based on Cross-Client Statistics]
    \label{thrm:ub-lr}
    \blue{By optimizing the bound in Lemma \ref{lem:ub-tgtloss} with respect to the learning rates $\lambda$ as governed by Assumption~\ref{asp:opt-lr},} and under Assumption~\ref{asp:conv-smooth} (Convexity and Smoothness), the estimated \textbf{optimal} target loss is bounded as follows:
    \begin{align*}
        \widehat{L}_{tgt}^* \leq L_{src}\left(h_{0}\right)- \sum_{p=0}^{P-1}
        \frac{2\|J_p\|_2^2}{\alpha(1+ {\sigma^2_p}{\|J_p\|_2^{-2}})} + d_{\gG, \gF}(\gD_{S}^{fed}, \gD_T).
    \end{align*}
\end{theorem}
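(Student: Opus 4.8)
The plan is to substitute the optimal learning rates from Assumption~\ref{asp:opt-lr} directly into the per-round penalty terms of Lemma~\ref{lem:ub-tgtloss} and simplify. Recall from Lemma~\ref{lem:ub-tgtloss} that each round $p$ contributes $-\beta_1(\lambda_{p+1})\|J_p\|_2^2 + \beta_2(\lambda_{p+1})\sigma_p^2$ with $\beta_2(\lambda) = \alpha\lambda^2/2$ and $\beta_1(\lambda) = \lambda - \alpha\lambda^2/2$. So the round-$p$ contribution, as a function of the scalar $\lambda = \lambda_{p+1}$, is
\begin{align*}
    q_p(\lambda) := -\lambda\|J_p\|_2^2 + \frac{\alpha\lambda^2}{2}\left(\|J_p\|_2^2 + \sigma_p^2\right),
\end{align*}
where I have used that $\frac{1}{K}\sum_k \|J_p^{(k)}\|_2^2 = \|J_p\|_2^2 + \sigma_p^2$ by Definition~\ref{def:cc-stat}. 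First I would observe that $q_p$ is a convex quadratic in $\lambda$ (its leading coefficient $\frac{\alpha}{2}(\|J_p\|_2^2+\sigma_p^2)$ is positive), so it is minimized at $\lambda_p^\star = \|J_p\|_2^2 / \big(\alpha(\|J_p\|_2^2+\sigma_p^2)\big)$; then I would check that this value coincides exactly with $\lambda_{p+1}^*$ as defined in Assumption~\ref{asp:opt-lr} after reindexing, since $K\|J_p\|_2^2 / \big(\alpha\sum_k\|J_p^{(k)}\|_2^2\big) = \|J_p\|_2^2/\big(\alpha(\|J_p\|_2^2+\sigma_p^2)\big)$.

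Next I would compute the minimum value. Plugging $\lambda_p^\star$ into the standard formula for the minimum of $a\lambda^2 + b\lambda$ (namely $-b^2/(4a)$), with $a = \frac{\alpha}{2}(\|J_p\|_2^2+\sigma_p^2)$ and $b = -\|J_p\|_2^2$, gives
\begin{align*}
    q_p(\lambda_p^\star) = -\frac{\|J_p\|_2^4}{2\alpha(\|J_p\|_2^2 + \sigma_p^2)} = -\frac{\|J_p\|_2^2}{2\alpha(1 + \sigma_p^2\|J_p\|_2^{-2})}.
\end{align*}
Wait — the claimed bound in Theorem~\ref{thrm:ub-lr} has a coefficient $2/\alpha$ rather than $1/(2\alpha)$, so at this point I would need to reconcile the factor of $4$. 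The most likely resolution is a difference of convention in how $\beta_1,\beta_2$ or $\lambda$ enter Lemma~\ref{lem:ub-tgtloss}, or an additional factor-of-two telescoping coming from the convexity/smoothness step used to derive Lemma~\ref{lem:ub-tgtloss}; I would trace that constant carefully, since getting it right is exactly the technical crux. Assuming the bookkeeping produces the stated constant, summing $q_p(\lambda_{p+1}^*)$ over $p = 0,\dots,P-1$ and adding the two untouched terms $L_{src}(h_0)$ and $d_{\gG,\gF}(\gD_S^{fed},\gD_T)$ from Lemma~\ref{lem:ub-tgtloss} yields the claimed inequality. One subtlety: Assumption~\ref{asp:opt-lr} only prescribes the optimal rate for rounds $2 \le p \le P$, i.e. it leaves $\lambda_1$ (the rate producing $w_1$ from $w_0$, contributing the $p=0$ term) unconstrained; I would either argue the $p=0$ term can be handled by choosing $\lambda_1$ to be its own minimizer as well, or note that the round indexing in the Lemma's sum already aligns so that every term being optimized is covered.

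The main obstacle I anticipate is precisely the constant-tracking across the chain Lemma~\ref{lem:ub-tgtloss} $\to$ Theorem~\ref{thrm:ub-lr}: the per-round quadratic is elementary to minimize, and its minimizer visibly matches Assumption~\ref{asp:opt-lr}, so the only place where things can go wrong is the multiplicative constant (the $2/\alpha$ vs. $1/(2\alpha)$ discrepancy above). A secondary, more conceptual point to verify is that substituting $\lambda_{p+1} = \lambda_{p+1}^*$ is legitimate: the bound of Lemma~\ref{lem:ub-tgtloss} holds for \emph{any} sequence of learning rates satisfying the positivity constraints $\beta_1(\lambda_{p+1}),\beta_2(\lambda_{p+1}) \ge 0$, so I would confirm that $\lambda_p^\star \in (0, 2/\alpha)$ — which follows since $\|J_p\|_2^2/(\|J_p\|_2^2+\sigma_p^2) \le 1$, giving $\lambda_p^\star \le 1/\alpha < 2/\alpha$ — ensuring the optimized rates are admissible and the inequality is preserved termwise.
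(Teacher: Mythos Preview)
Your approach is exactly the paper's: minimize the round-$p$ quadratic $q_p(\lambda)$ in the learning rate, verify the minimizer matches Assumption~\ref{asp:opt-lr}, and sum. Your computation of the minimizer and the minimum value is correct.

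The factor-of-4 discrepancy you flagged is real, and it is not a bookkeeping error on your side. The paper's own derivation in the appendix (the ``Optimal Learning Rate'' paragraph) computes
\[
B_{p+1}(\lambda_{p+1}^*) \;=\; L_{src}\!\left(h_p\right) \;-\; \frac{K\,\|J_p\|_2^4}{2\alpha\sum_k\|J_p^{(k)}\|_2^2}
\;=\; L_{src}\!\left(h_p\right) \;-\; \frac{\|J_p\|_2^2}{2\alpha\big(1+\sigma_p^2\|J_p\|_2^{-2}\big)},
\]
which is precisely your $q_p(\lambda_p^\star)$. Summing over $p$ and adding the discrepancy term gives a coefficient $\tfrac{1}{2\alpha}$, not $\tfrac{2}{\alpha}$. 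So the constant in the theorem statement is a typo in the paper, and you should not try to ``reconcile'' it by hunting for an extra factor in Lemma~\ref{lem:ub-tgtloss}; there isn't one. Your admissibility check $\lambda_p^\star \le 1/\alpha < 2/\alpha$ and your handling of the $p=0$ indexing are also fine.
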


\textbf{Semantic Interpretation}. Theorem~\ref{thrm:ub-lr} indicates that by optimizing the learning rates at each round, the bound for $\widehat{L}^*_{tgt}$ can be made smaller through 1) a smaller source loss $L_{src}(h_0)$;
2) a larger cross-client average Jacobian norm $\|J_p\|_2$; 3) a smaller cross-client Jacobian variance $\sigma_p^2$, and 4) a smaller source-target domain divergence $d_{\gG, \gF}(\gD_S^{fed}, \gD_T)$. These are consistent with Lemma~\ref{lem:ub-tgtloss}.


\blue{\textbf{Intuitive Explanation}. 
\textbf{(a)} \textit{Increasing the Cross-Client Averaged Jacobian Norm $\|J_p\|$}. 
In early training,  it is crucial to avoid a small \(J_p\), as it can ``trap'' clients in local minima or cause overfitting of local distributions. Note that increasing \(J_p\) will not induce an excessively large second term (i.e., a term that \(\to \infty\)) in Theorem \ref{thrm:ub-lr}, as increments are constrained through the Lipschitz condition \(\|J_1 - J_2\| \leq \alpha \|x_1 - x_2\|\). In the final stages of training, \(\|J_p\|\) naturally decreases due to convergence.
\textbf{(b)} \textit{Decreasing the Cross-Client Jacobian Variance $\sigma_p$}. A small \(\sigma_p\) promotes domain similarity, preventing local overfitting and enhancing global model transferability.
\textbf{(c)} \textit{Trade-Off Between Increasing \( \|J_p\| \) and Decreasing \( \sigma_p \)}. A larger \(\|J_p\|\) induces more substantial local updates but can increase variance \(\sigma_{p+1}\). Thus, Theorem 2 underscores balancing this trade-off: enlarging \( \|J_p\| \) while maintaining a small \(\sigma_p\).}

\textbf{\blue{Challenges in} Regularization of the Jacobians}. Certain regularization of the Jacobians  can induce local Jacobian alignment and reduce $\sigma_p^2$ (e.g., the regularization of the alignment between local gradients and the global gradient proposed in FedIIR ~\cite{guo2023out}). However, local regularization can naturally \textit{impede} the growth of local Jacobian norms $\{J_p^{(k)}\}_{k\in[K]}$ and subsequently prevent  $\|J_p\|_2$ from increasing. Thus, \blue{such prior regularization} may not necessarily improve transferability. As an example, consider the extreme case where
$\sigma^2_{p} = 0$. Then, the bound in Theorem~\ref{thrm:ub-lr} takes the form $
    \widehat{L}_{tgt}^* \leq L_{src}\left(h_{0}\right)- \sum_{p=0}^{P-1}
    \frac{2}{\alpha}\|J_p\|_2^2 + d_{\gG, \gF}(\gD_{S}^{fed}, \gD_T)
$, where a small $\|J_p\|_2$ can severely degrade the bound. This indicates that boosting transferability by \blue{prior regularization, such as} controlling $\sigma_p^2$ itself, has limitations, and tuning $\|J_p\|$ during the pretraining stage is of crucial importance. We provide a solution to this problem in Section~\ref{sec:method}.

\section{Our Algorithm}
\label{sec:method}
\vspace{-0.1in}
\textbf{\blue{Algorithmic Solution for the Theoretical Challenges}}. From Lemma~\ref{lem:ub-tgtloss} and Theorem ~\ref{thrm:ub-lr}, we can see that certain round-wise FL-specific statistics, the \textit{cross-client averaged Jacobian norm} $\|J_p\|_2$ and \textit{cross-client Jacobian variance} $\sigma_p^2$ control the bound on the target loss. The challenge is that, while $\sigma_p^2$ can be reduced using straightforward techniques (i.e., such as gradient alignment from FedIIR~\cite{guo2023out}), such techniques unavoidably prevent $J_p$ from increasing properly. We therefore propose our FedGTST approach which reduces $\sigma_p^2$ while enlarging $\|J_p\|_2$. A round-wise description of FedGTST is given in Algorithm~\ref{alg:Fed-GTST}.

\textbf{Tuning the Cross-Client Jacobian Variance $\sigma_p^2$.} 
The cross-client Jacobian variance is controlled via regularization at the local client level (Line \ref{line:lc-almt} of Algorithm~\ref{alg:Fed-GTST}). The local clients, upon receiving a \textit{guide norm} $\gamma$ from the server (explanation deferred to subsequent sections), implement a regularized local training protocol. While client $k$ during standard local training uses the objective $L_{\gD^{(k)}}(h)$, during regularized local training he/she/they use
\begin{align}
    \label{eq:reg-obj}
    L_{\gD_S^{(k)}}\left(h\right) + \xi\cdot\left(\|J^{(k)}(h)\|_2 - \gamma\right)^2
\end{align}
instead, where $\xi$ is the penalty coefficient for the regularization term. This type of  regularization intuitively aligns each local Jacobian norm $\|J^{(k)}\|$ with the guide norm $\gamma$, which results in a $\sigma^2_J$ that is smaller than that of standard FL (i.e., FedAVG).
\vspace{-0.2cm}
\begin{algorithm}[H]
    \label{alg:Fed-GTST}
    \caption{FedGTST (Round $p$)}
    \begin{algorithmic}[1]
        \State Randomly select a set $\phi\subset[K]$ for regular training.
        \label{line:vnl-slct}
        \While{$k \leq K$, Client $k$ should}
        \State Receive the \emph{guide} norm $\gamma_{p-1}$ and global model $h_{p-1}$ from the server.
        \State Initialize the local model to $h_{p-1}$.
        \State Update the local model to $h_p^{(k)}$ by training with
        $L_{\gD_S^{(k)}}\left(h\right) + \xi\cdot\left(\|J^{(k)}(h)\|_2 - \gamma\right)^2$.
        \label{line:lc-almt}
        \State Calculate the surrogate Jacobian norm $\gamma^{(k)}_p = \gamma^{reg,(k)}_p= \|J_p^{(k)}\|_2$.
        \label{line:surr-stt}
        \If{$k\in \phi$}
        \State Train with $L_{\gD^{(k)}}(h)$ to obtain Jacobian norm $\gamma_p^{vnl,(k)}$ .
        \State Update the surrogate Jacobian norm $\gamma_p^{(k)} = \max\left(\gamma^{reg,(k)}_p, \gamma^{std,(k)}_p\right)$.
        \EndIf
        \label{line:surr-end}
        \State Transmit the model $h_{p}^{(k)}$ and norm $\gamma_p^{(k)}$ to the server.
        \EndWhile
        \State The server aggregates the client models into a global model $h_p$ and sets the guide norm to $\gamma_p = \max_k(\gamma_p^{(k)})$.
        \label{line:gdnm-stt}
        \State The server broadcasts $h_p$ and $\gamma_p$ to all clients.
        \label{line:gdnm-end}
    \end{algorithmic}
\end{algorithm}

\vspace{-0.5cm}
\textbf{Tuning the Cross-Client Averaged Jacobian Norm $\|J_p\|_2$.}
 The cross-client averaged Jacobian norm can be increased via an exchange protocol that includes: 1) clients calculating surrogate norms for transmission; 2) the server computing and broadcasting a guide norm; 3) clients performing local alignment using the guide norm.

\textbf{Surrogate Norms}. 
A decrease in the local Jacobian norm $\|J^{(k)}\|$ prevents the cross-client averaged Jacobian norm $\|J\|$ from growing. We mitigate such a decrease by forcing a small portion of the clients to implement both regularized training and standard training (Line \ref{line:vnl-slct}), to generate a pair of Jacobian norms $\gamma^{reg,(k)}$ and $\gamma^{std,(k)}$. The client than chooses the larger norm, $\gamma^{(k)} = max(\gamma^{reg,(k)}, \gamma^{std,(k)})$ as a surrogate norm for transmission (Lines~\ref{line:surr-stt} to~\ref{line:surr-end}).

\textbf{Server Guide Norm}. The clients send their local Jacobian norms $\gamma^{(k)}:= \|J^{(k)}\|_2$ to the server, and the server broadcasts the largest norm received as its guide norm $\gamma := \max_k \gamma^{(k)}$ (Line \ref{line:gdnm-stt}). 

\textbf{Local alignment}. The local regularizer from Equation~(\ref{eq:reg-obj}) reduces the variance, but  can also force an increase of the averaged Jacobian norm. It forces all local Jacobian norms to align with the guide norm $\gamma$ (Line \ref{line:lc-almt}). Since $\gamma$ has been boosted both at the clients' and server levels, the alignment leads to larger local Jacobian norms.

\textbf{Communication Cost.}
Most TFL methods, as already described, require large communication overheads between the clients and the server. For example, FedIIR requires exchanging Jacobians, which have the same dimension as the model weights; FedGTST only requires exchanging norms.
\vspace{-0.1in}

\section{Experiments}
\label{sec:exp}
\subsection{Experimental Setting}
\textbf{Transfer tasks.}
\blue{We investigate three transfer tasks utilizing fully-annotated data}:  a) MNIST~\cite{deng2012mnist} to MNIST-M~\cite{ganin2016domain}, b) CIFAR-10~\cite{krizhevsky2009learning} to SVHN~\cite{netzer2011reading}, and \blue{c) cross-domain transfer in DomainNet~\cite{peng2019moment}.} All transfer tasks have been used as benchmarks in existing TL research~\cite{ bousmalis2016domain,damodaran2018deepjdot,ganin2016domain,  
lee2021dranet,
long2015learning,xu2022adversarially} (also, see Appendix~\ref{apdx:ds-intro}). \blue{Moreover, DomainNet is a standard large-scale dataset for TFL  studies, also used by FedSR.}

\textbf{\blue{Non-iid Distributed Source (Local) Domains}}. \blue{The pretraining phase is conducted via FL on source (local) domains}. Marginal distribution shift is an important phenomenon in FL~\cite{li2019convergence} since the access to \blue{classes (or categories)} is not the same for all participating entities. However, some federated datasets tested by existing transferable FL methods do not reflect marginal distribution shifts when constructing source local domains (i.e., in the Rotated-MNIST benchmark in FedSR\cite{nguyen2022fedsr}, all clients have access to all \blue{classes}). To address this issue,
\blue{we employ the following methods for constructing non-iid local domains:}
\vspace{-0.2cm}
\begin{itemize}
    \item \blue{For MNIST or CIFAR-10, unless otherwise specified,} we follow the  approach in~\cite{li2019convergence} and let each client have access to \blue{only a subset of the categories}. The \blue{category} selection rule and data sampling method is described in Appendix \ref{apdx:FL-noniid}.
    \blue{Besides, we run additional experiments for Dirichlet sampling, a method commonly used in FL~\cite{LiHS21,Li2021MOON, luo2021no, acar2021federated, tang2022virtual, zhang2022dense, lai2022fedscale, TangZS0L0024}.}
    \item \blue{For DomainNet, which compromises six distinct domains, we follow the leave-one-out strategy used in FedSR: one domain is designated as the target domain, while the remaining five domains serve as source domains, with each assigned to an individual client.}
\end{itemize}

\vspace{-0.2cm}
\textbf{Federated System Size}. Large system sizes are inherent to FL systems, where the number of clients can be as large as $100$~\cite{Li_2021_CVPR,li2019convergence}. In such a case, it is more challenging for the global model to achieve good performance. However, existing TFL methods typically use a very small number of domains ($\leq 5$ for FedSR and StableFDG). In contrast, we allow the system size to cover a broad range of values, including $10$ (small), $50$ (medium), and $100$ (large) clients.

\textbf{Evaluation matrix and backbones}. We measure domain transferability via $acc_{tgt}$, the accuracy of a pretrained model finetuned on the target domain. For backbone selection, we use both LeNet~\cite{726791} and ResNet18\cite{he2016deep} to represent different levels of backbone complexity. 

\textbf{Baselines}. We consider the following TFL algorithms as our main baselines: FedAVG, FedSR, and FedIIR. \blue{Furthermore, we also compare our findings to Scaffold~\cite{karimireddy2020scaffold}, an advanced FL approach used to address data heterogeneity.} We do not report results for FedADG, FedCDG, FedGTST, and StableFDG since they do not ensure privacy and/or underperform compared to the main baseline. 
    

\textbf{\blue{Settings for Pretraining (FL on source local domains)}}. 1) \textit{Local epochs}. To conform with our theoretical assumptions, unless specified otherwise, we set the local epoch of each client to $1$. We also investigate the system performance with $10$ local epochs in Appendix~\ref{apdx:addi-exp}. 2) \textit{Number of participants per round}. We allow $50\%$ of the clients to participate in each round (e.g., if $K = 50$, there are $25$ participants per round). 3) \textit{Number of clients performing standard local training}. To boost the Jacobian norm, a subset of clients conducts standard local training (we set the subset size to $10\%$ of the total number of clients).

\textbf{Additional Settings}. Unless specified otherwise, for local training on the source datasets we use the Adam~\cite{kingma2014adam} optimizer with coefficients $(\beta_1, \beta_2) = (0.9,0.999)$ (note that these $\beta$ values are not to be confused with the coefficients from our theoretical analysis). Our initial learning rate equals $0.01$ and then decays by a factor of $10$ per $50$ rounds, with an early stop trigger  of $10$ rounds. We apply the standard cross-entropy loss. 
We also set the pretraining batch size to $256$ for MNIST $\to$ MNIST-M and to $128$ for CIFAR10 $\to$ SVHN. For both tasks, we use a finetuned learning rate $0.005,$ with weight decay $0.0001$. All results reported in Section~\ref{subsec:exp-res-main} are averaged over three runs.

\subsection{Results}
\label{subsec:exp-res-main}

\begin{table}[ht]
\centering
\begin{tabular}{cccccc}
\hline
\cline{1-6}
& \multicolumn{2}{c}{{ MNIST→ MNIST-M}}& \multicolumn{2}{c}{{ CIFAR10 → SVHN}}&\\ 
\cline{2-3}
\cline{4-5}
\multirow{-2}{*}{Method} & { LeNet} & { ResNet} & { LeNet} & { ResNet} & \multirow{-2}{*}{Average} \\ 
\hline
{FedAVG}   & { 73.8$\pm$0.7}  & { 81.6$\pm$0.2}   & { 64.4$\pm$0.5}  & { 72.0$\pm$1.0}   & 73.0\\
{FedSR}    & { 75.0$\pm$0.9}  & { 80.6$\pm$0.1}   & { 65.9$\pm$0.6}  & { 71.3$\pm$0.4}   & 73.2\\
{FedIIR}   & { 74.5$\pm$0.3}  & { \textbf{82.7$\pm$0.7}}   & { 66.2$\pm$1.0}  & { 73.8$\pm$0.2}   & 74.3\\
{Fed-GTST} & { \textbf{76.2$\pm$0.9}}  & { 82.3$\pm$0.5}   & \textbf{ 70.1$\pm$0.8}  & \textbf{ 74.5$\pm$0.3}   & \textbf{75.8}\\ 
\hline
\cline{1-6}
\end{tabular}
\vspace{0.04in}
\caption{Target accuracy (\%) of the finetuned model pretrained on a small number of clients ($K = 10$). FedGTST outperforms other methods across both tasks and both backbones; for the example MNIST to MNIST-M with a LeNet backbone, FedGTST outperforms FedIIR and FedSR by around $4\%$.}
\label{tab:res-main-small}
\end{table}
\textbf{Transferability results}. FedGTST exhibits \textbf{significantly improved transfer performance} when compared to baselines across a range of tasks, system sizes, and backbone architectures. For the example of CIFAR10$\to$SVHN with $100$ clients and a LeNet backbone, FedGTST outperforms FedIIR by $7.6\%$ and  FedSR by $9.8\%$. The results for a small, medium and large federated system ($K = 10$, $K=50$ and $K=100$) are reported in Tables~\ref{tab:res-main-small},~\ref{tab:res-main-mid} and ~\ref{tab:res-main-large}, respectively. 


\begin{table}[ht]
\centering
\begin{tabular}{cccccc}
\hline
\cline{1-6}
& \multicolumn{2}{c}{{ MNIST→ MNIST-M}} & \multicolumn{2}{c}{{ CIFAR10 → SVHN}}&\\ 
\cline{2-5}
\multirow{-2}{*}{Method}& { LeNet}& { ResNet}& { LeNet}& { ResNet} & \multirow{-2}{*}{Average} \\ 
\hline
{ FedAVG}   & { 63.5 $\pm$ 0.3}& { 72.4$\pm$0.2}& { 59.1$\pm$0.5}& { 65.5$\pm$0.1}& 65.1\\
{ FedSR}    & { 64.0$\pm$0.6}& { 73.1$\pm$0.3}& { 58.9$\pm$0.1}& { 66.8$\pm$0.5}& 65.7\\
{ FedIIR}   & { 64.7$\pm$0.2}& { 73.9$\pm$0.4}& { 59.7$\pm$0.3}& { 66.2$\pm$0.1}& 66.1\\
{ Fed-GTST} & { \textbf{69.0$\pm$0.8}} & { \textbf{79.2$\pm$0.4}} & { \textbf{63.5$\pm$0.1}} & { \textbf{71.1$\pm$0.2}} & \textbf{70.7}\\ 
\hline
\cline{1-6}
\end{tabular}
\vspace{0.04in}
\caption{Target accuracy (\%) of the finetuned model pretrained on a medium number of clients ($K = 50$). FedGSTS outperforms other methods across both tasks and both backbones; for both the examples of MNIST$\to$MNIST-M and CIFAR10$\to$SVHN with a ResNet18 backbone, FedGTST outperforms FedIIR and FedSR by around $5\%$.}
\label{tab:res-main-mid}
\end{table}
\vspace{-0.5cm}
\begin{table}[ht]
\centering
\begin{tabular}{cccccc}
\hline
\cline{1-6} 
& \multicolumn{2}{c}{{ MNIST→ MNIST-M}} & \multicolumn{2}{c}{{ CIFAR10 → SVHN}}&\\ 
\cline{2-5}
\multirow{-2}{*}{Method}& { LeNet}& { ResNet}& { LeNet}& { ResNet}& \multirow{-2}{*}{Average} \\ 
\hline
{ FedAVG}& { 48.7}$\pm$0.1 & { 61.1}$\pm$0.3& { 41.2}$\pm$0.2& { 51.7}$\pm0.5$ & 50.7 \\
{ FedSR}& { 49.2}$\pm$0.2 & { 59.8}$\pm$0.3 & { 42.6}$\pm$0.1 & { 52.0}$\pm$0.3 & 50.9 \\
{ FedIIR} & { 51.9}$\pm$0.4& { 61.3}$\pm$0.1& { 44.8}$\pm$0.4& { 55.8}$\pm$0.2& 53.4\\
{ Fed-GTST} & { \textbf{57.5}$\pm$0.3} & { \textbf{67.6}$\pm$0.2} & { \textbf{52.4}$\pm$0.1} & { \textbf{63.1}$\pm$0.2} & \textbf{60.2} \\ 
\hline
\cline{1-6}
\end{tabular}
\vspace{0.04in}
\caption{Target accuracy (\%) of the finetuned model pretrained on a large number of clients ($K = 100$). FedGTST outperforms other methods across both tasks and for both backbones; for CIFAR10$\to$SVHN with a ResNet18 and LeNet backbone, FedGTST outperforms FedIIR and FedSR by $~7\%$.}
\label{tab:res-main-large}
\end{table}

\vspace{-0.4cm}
\textbf{Discussion.} Besides the significant performance \blue{gain} of FedGTST over baseline methods, we also observe that 1) for transfer tasks in which the backbone and method are fixed, transferability generally decreases with the system size. Importantly, FedGTST improves the baselines more significantly for large system sizes; 2) when the system size, transfer task and the method are fixed, the more ''complex'' the backbone (ResNet18 vs LeNet), the better the transferability.


\textbf{Additional results}. We defer reporting and discussing additional results in Appendix \ref{apdx:addi-exp}, which include (a) an investigation on \textit{hyper-parameter sensitivity}
the \textit{convergence speed}, and \textit{cross-client statistics}, (b) \blue{results regarding \textit{DomainNet}, \textit{Dirichlet Sampling}, and \textit{Scaffold}.}


\vspace{-0.1in}
\section{Limitations}
\vspace{-0.1in}
\textbf{Increased Local Computational Cost}. While FedGTST only induces a negligible communication overhead among clients and the server, we note that at the level of a \emph{small subset of clients} we need to conduct both regularized training and standard training to boost $\|J_p\|_2$. This increases the local computational burden but to a very small extent. Nevertheless, in future works, we will explore algorithms with lower local computational costs. 

\textbf{Potentially Loose Bounds}. Although existing studies have added to our understanding of transferable federated learning, our work is the first to derive a bound on a direct measure of transferability (the target loss). We believe that the bound can be tightened.


\vspace{-0.3cm}
\section{\blue{Conclusion}}
\vspace{-0.4cm}
\blue{We introduced FedGTST, a federated learning algorithm aimed at enhancing global transferability. Inspired by theoretical insights, FedGTST integrates cross-client information on averaged Jacobian norms and Jacobian variance. Our work addresses key challenges in existing methods, such as privacy violations and an overemphasis on local transferability. Experimental results demonstrate significant performance improvements over baseline models.}

\vspace{-0.3cm}
\section*{\blue{Acknowledgement}}
\vspace{-0.3cm}
\blue{This work was supported by the Air Force Office of Scientific Research under award number FA9550-23-1-0107, the NSF CAREER Award under Grant No. EPCN-1944403 as well as the NSF Awards CCF 1956384 and 2402815, and SVCF CZI 2022-249120.}

\newpage
\bibliographystyle{plainnat}
\bibliography{ref}

\newpage
\appendix

\section{Additional preliminaries}
\begin{definition}[$\gH$-discrepancy~\cite{cortes2019adaptation,zhao2019learning}]
    \label{def:h_divergence}
    Given a model function class $\gH$ and two data distributions $\gD_S$, $\gD_T$, the $\gH$-discrepancy between $\gD_S, \gD_T$ is defined as
    \begin{align}
        \label{eq:h_divergence}
        d_{\gH}(\gD_S,\gD_T):= \sup_{h\in \gH}|L_{\gD_S}(h) -L_{\gD_T}(h) |.
    \end{align}
\end{definition}
{\remark This type of discrepancy, as well as the related $\gH$-divergence, have been frequently used in the analysis of
domain adaptation~\cite{CORTES2014103,zhao2019learning,mansour2009domain}, and we follow this trend.} 

\section{Proof of our bounds}\label{app:lem:round-ub}
\begin{lemma}[Theorem 2.5 in \cite{xu2022adversarially}]
\label{lem:gen-tsf-ub}
Suppose we perform pretraining on the source domain $\gD_S$ to obtain $f$ and $g$ at the server, and fine-tune the model on the target domain $\gD_T$ to obtain a new classifier $g_T$. We then have
\begin{align}
\label{eq:gen-tsf-ub}
    L_{\gD_T}(g^*_T \circ f^*) \leq L_{src} (g^*  \circ f^* ) + d_{\gG,\gF} (\gD_S,\gD_T).
\end{align}
\end{lemma}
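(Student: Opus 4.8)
The statement is essentially a transcription of Theorem~2.5 of \cite{xu2022adversarially} into the notation of Section~\ref{sec:prelim} (here $\gD_S$ is a single source domain and $L_{src}$ the corresponding source loss $L_{\gD_S}$), so the plan is to unwind the relevant definitions and close the argument with one add--subtract step followed by a triangle inequality. First I would rewrite the left-hand side using the definition of the finetuned classifier: since $g^*_T = \arg\inf_{g\in\gG} L_{\gD_T}(g\circ f^*)$, we have $L_{\gD_T}(g^*_T\circ f^*) = \inf_{g\in\gG} L_{\gD_T}(g\circ f^*)$, i.e.\ the target loss attained on top of the fixed pretrained extractor $f^*$ equals the best the classifier class can do over $f^*$.

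Next I would insert the source quantity $\inf_{g\in\gG} L_{\gD_S}(g\circ f^*)$ by adding and subtracting it, and bound the difference $\inf_{g\in\gG} L_{\gD_T}(g\circ f^*) - \inf_{g\in\gG} L_{\gD_S}(g\circ f^*)$ by its absolute value, which is in turn at most $\sup_{f\in\gF}\bigl|\inf_{g\in\gG} L_{\gD_S}(g\circ f) - \inf_{g\in\gG} L_{\gD_T}(g\circ f)\bigr| = d_{\gG,\gF}(\gD_S,\gD_T)$; this last step is valid precisely because $f^*\in\gF$, so the supremum over $\gF$ dominates the value at $f^*$. This yields $L_{\gD_T}(g^*_T\circ f^*) \le \inf_{g\in\gG} L_{\gD_S}(g\circ f^*) + d_{\gG,\gF}(\gD_S,\gD_T)$. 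Finally I would discard the remaining freedom in the source term: $\inf_{g\in\gG} L_{\gD_S}(g\circ f^*) \le L_{\gD_S}(g^*\circ f^*) = L_{src}(g^*\circ f^*)$, since $g^*$ is one particular member of $\gG$ (indeed the one paired with $f^*$ in the optimal pretrained model $h^* = g^*\circ f^*$). Chaining the two displays gives the claim.

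There is no genuine obstacle here --- every step is an equality or a one-line inequality, and no smoothness/convexity hypothesis is needed --- but the point requiring care is keeping straight which quantities are \emph{joint} optima over $\gG\times\gF$ (the pretrained $g^*\circ f^*$ minimizing $L_{src}$) versus \emph{partial} optima over $\gG$ for a \emph{fixed} feature extractor (the infima appearing inside $d_{\gG,\gF}$, as well as the finetuned $g^*_T$); conflating them would break the chain. I would also remark that this lemma is the single-domain building block: averaging it over the $K$ client domains $\gD^{(k)}$ and absorbing the cross-client mismatch through $\overline{d}_{\gH}(\gD_S^{fed})$ is what produces Theorem~\ref{thrm:gen-ub}.
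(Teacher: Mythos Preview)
Your proof is correct. The paper does not supply its own proof of this lemma --- it simply imports the result as Theorem~2.5 of \cite{xu2022adversarially} and uses it as a black box --- so there is nothing to compare against; your add--subtract argument with a supremum bound over $\gF$ is the standard way to establish such a discrepancy-based transfer bound and would serve perfectly well as a self-contained justification.
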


Equation~(\ref{eq:gen-tsf-ub}) shows that the transfer loss can be upper bounded by the sum of the loss on the source domain and the divergence between two domains.

\begin{lemma}
\label{lem:gnr-fed-ub}
With Assumption \ref{asp:conv-smooth} (Convexity and Smoothness) and Definition \ref{def:cc-div} (Cross-Client Divergence), we have 
\begin{align}
\label{eq:gnr-fed-ub}
    L_{src}(h^*)\leq \frac{1}{K}\sum_{k=1}^K\left[L_{\gD^{(k)}}\left(h^{*(k)}\right)\right] + \overline{d}_{\gH}(\gD_S^{fed}).
\end{align}
\end{lemma}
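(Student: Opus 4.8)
\textbf{Proof proposal for Lemma~\ref{lem:gnr-fed-ub}.}

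The plan is to bound the global federated source loss $L_{src}(h^*)$ by comparing each optimal global model $h^*$ against each client's optimal local model $h^{*(k)}$ on that client's domain, and then to absorb the resulting cross-domain comparison terms into the cross-client $\gH$-discrepancy. The key observation is that $L_{src}(h^*) = \frac{1}{K}\sum_{k} L_{\gD^{(k)}}(h^*)$ by definition~(\ref{obj:src_loss}), so it suffices to control $L_{\gD^{(k)}}(h^*)$ for each fixed $k$.

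First I would fix $k$ and write, for an arbitrary second client index $k_2$,
\begin{align*}
    L_{\gD^{(k)}}(h^*) &= L_{\gD^{(k_2)}}(h^*) + \left(L_{\gD^{(k)}}(h^*) - L_{\gD^{(k_2)}}(h^*)\right) \\
    &\leq L_{\gD^{(k_2)}}(h^*) + d_{\gH}\!\left(\gD^{(k)}, \gD^{(k_2)}\right),
\end{align*}
using Definition~\ref{def:h_divergence} (since $h^* \in \gH$, the difference of losses is at most the supremum over $\gH$). Next I would use convexity of $l$ with respect to $w_h$ (Assumption~\ref{asp:conv-smooth}) together with the aggregation identity $w^*_h = \frac{1}{K}\sum_{k_2} w^{*(k_2)}_h$: by Jensen's inequality applied to the population loss as a function of the weights, $L_{\gD^{(k_2)}}(h^*) \le \frac{1}{K}\sum_{j} L_{\gD^{(k_2)}}(h^{*(j)})$. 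This is the step I expect to be the main obstacle, because it is where the convexity assumption does the real work and where one must be careful that $L_{\gD^{(k_2)}}$ is genuinely convex in $w_h$ (it is, as a distribution-average of the convex loss), and that the averaging aggregation of weights is exactly the one used in the preliminaries. One then still needs to replace $L_{\gD^{(k_2)}}(h^{*(j)})$ for $j \ne k_2$ by $L_{\gD^{(j)}}(h^{*(j)})$, again at the cost of a $d_{\gH}(\gD^{(k_2)}, \gD^{(j)})$ term.

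Finally I would average the resulting inequality over $k$ and over the auxiliary index $k_2$ (and the inner index $j$), collect the local-optimal-loss terms into $\frac{1}{K}\sum_j L_{\gD^{(j)}}(h^{*(j)})$, and collect all the discrepancy terms $d_{\gH}(\gD^{(\cdot)}, \gD^{(\cdot)})$; the normalization $\frac{1}{K(K-1)}\sum_{k_1 \ne k_2} d_{\gH}(\gD^{(k_1)}, \gD^{(k_2)})$ in Definition~\ref{def:cc-div} should match exactly the coefficient that emerges from this double averaging once the diagonal ($k_1 = k_2$) terms, which contribute zero since $d_{\gH}(\gD,\gD)=0$, are dropped. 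Bookkeeping of the combinatorial coefficients is the fiddly part; the conceptual content is entirely in the single convexity step plus repeated triangle-type bounds via $d_{\gH}$. This yields exactly~(\ref{eq:gnr-fed-ub}).
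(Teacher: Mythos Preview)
Your approach is essentially the paper's, but you have inserted one redundant step that spoils the constant. The paper applies Jensen \emph{directly} on each client domain: from $w_h^* = \frac{1}{K}\sum_{k_2} w_h^{*(k_2)}$ and convexity,
\[
L_{\gD^{(k_1)}}(h^*) \;\le\; \frac{1}{K}\sum_{k_2} L_{\gD^{(k_1)}}\!\left(h^{*(k_2)}\right),
\]
and then swaps the evaluation domain \emph{once} via $L_{\gD^{(k_1)}}(h^{*(k_2)}) \le L_{\gD^{(k_2)}}(h^{*(k_2)}) + d_{\gH}(\gD^{(k_1)},\gD^{(k_2)})$. Averaging over $k_1,k_2$ gives exactly $\bar d_{\gH}(\gD_S^{fed})$ after dropping the zero diagonal.

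Your first move, swapping $L_{\gD^{(k)}}(h^*)$ to $L_{\gD^{(k_2)}}(h^*)$ \emph{before} Jensen, is unnecessary: Jensen works on any fixed domain, so there is no reason to move off $\gD^{(k)}$. Keeping that step and then averaging over $k_2$ as you propose introduces a second full discrepancy sum, and the combinatorics give roughly $2\frac{K-1}{K}\,\bar d_{\gH}$ rather than $\bar d_{\gH}$; your claim that the coefficients ``should match exactly'' is therefore off by a factor of about two. The fix is trivial: delete the first swap (equivalently, take $k_2=k$ there), and your argument collapses to the paper's proof verbatim.
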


\begin{proof}
The loss function $l$ is assumed to be convex w.r.t. the parameters of the model $h$, where the parameters are denoted by $w_{h}$. Based on the aggregation rule in Section \ref{sec:prelim}, we have $w_{h}=\frac{1}{K}\sum_{k}w_{h}^{(k)}$, where $w_{h}$ and $w_{h}^{(k)}$ stands for the weights for global model $h$ and that for local model $h^{(k)}$, respectively. We therefore have
\begin{align*}
    L_{src}(h^*) &= \fK\sum_{k_1} L_{\gD^{(k_1)}}\left(h^*\right)\\
    & = \fK\sum_{k_1} \sE_{\gD^{(k_1)}}\;l\left(h\left(x,w^*_{h}\right),y\right)\\
    & = \fK\sum_{k_1} \sE_{\gD^{(k_1)}}\;l\left(h\left(x,\fK\sum_{k_2}w^{*(k_2)}_{h}\right),y\right)\\
    &\stackrel{(a)}{\leq} \frac{1}{K^2}\sum_{k_1,k_2} \sE_{\gD^{(k_1)}}\;l\left(h\left(x,w_{h}^{*(k_2)}\right),y\right)\\
    & = \frac{1}{K^2}\sum_{k_1,k_2} L_{\gD^{(k_1)}}\left(h^{*(k_2)}\right)\\
    & \stackrel{(b)}{\leq} \frac{1}{K^2}\sum_{k_1,k_2} \left[L_{\gD^{(k_1)}}\left(h^{*(k_1)}\right) + d_{\gH}\left(\gD^{(k_1)}, \gD^{(k_2)}\right)\right]\\
    & = \fK\sum_{k_1}L_{\gD^{(k_1)}}\left(h^{*(k_1)}\right) + \frac{1}{K^2}\sum_{k_1,k_2}d_{\gH}\left(\gD^{(k_1)}, \gD^{(k_2)}\right)\\
    & \leq \fK\sum_{k_1}L_{\gD^{(k_1)}}\left(h^{*(k_1)}\right)+\overline{d}_{\gH}(\gD_S),
\end{align*}
where $\frac{1}{K^2}\sum_{k_1,k_2}d_{\gH}\left(\gD^{(k_1)}, \gD^{(k_2)}\right)\leq \frac{1}{K(K-1)}\sum_{k_1\neq k_2}d_{\gH}\left(\gD^{(k_1)}, \gD^{(k_2)}\right) =\overline{d}_{\gH}(\gD_S)$ as in Definition \ref{def:cc-div}. Here $(a)$ follows from the convexity assumption for $l$ w.r.t the parameters, while $(b)$ follows from Lemma~\ref{lem:gen-tsf-ub} based on the argument below:
\begin{align*}
L_{\gD^{(k_1)}}\left(h^{*(k_2)}\right) &= L_{\gD^{(k_1)}}\left(h^{*(k_2)}\right) - L_{\gD^{(k_2)}}\left(h^{*(k_2)}\right) + L_{\gD^{(k_2)}}\left(h^{*(k_2)}\right) \\
&\leq \left|L_{\gD^{(k_1)}}\left(h^{*(k_2)}\right) - L_{\gD^{(k_2)}}\left(h^{*(k_2)}\right)\right| + L_{\gD^{(k_2)}}\left(h^{*(k_2)}\right) \\
&\leq d_{\gH}\left(\gD^{(k_1)}, \gD^{(k_2)}\right) + L_{\gD^{(k_2)}}\left(h^{*(k_2)}\right).
\end{align*}
\end{proof}

\begin{theorem}[Theorem \ref{thrm:gen-ub}]
    Under Assumptions~\ref{asp:conv-smooth} (Convexity and Smoothness), the optimal target loss is bounded by
    \begin{align}
        \label{eq:gen-bound-rstt}
        L^*_{tgt} \leq \frac{1}{K}\sum_{k=1}^K\left[L_{\gD^{(k)}}\left(h^{*(k)}\right)\right] + \overline{d}_{\gH}(\gD_S^{fed}) + d_{\gG,\gF}(\gD_S^{fed}, \gD_T),
    \end{align}
    where $h^{*(k)}$ denotes the optimal local model of client $k$ (see Section ~\ref{sec:prelim}).
\end{theorem}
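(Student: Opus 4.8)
The plan is to combine the two lemmas already proved in the appendix, namely Lemma~\ref{lem:gen-tsf-ub} (the centralized transfer-loss bound from \cite{xu2022adversarially}) and Lemma~\ref{lem:gnr-fed-ub} (the federated source-loss bound), and then upgrade the single-domain discrepancy term to its federated counterpart. Concretely, I would first apply Lemma~\ref{lem:gen-tsf-ub} with source domain playing the role of the composite federated source to obtain $L^*_{tgt} = L_{\gD_T}(g^*_T\circ f^*) \le L_{src}(h^*) + d_{\gG,\gF}(\gD_S,\gD_T)$, i.e., bound the optimal target loss by the optimal global source loss plus the source--target discrepancy. This is the ``transfer'' half of the argument and relies only on the fact that $f^*$ is carried over unchanged to the target and $g^*_T$ is the minimizer of the target objective.

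Next I would bound the global source loss $L_{src}(h^*)$ using Lemma~\ref{lem:gnr-fed-ub}, which gives $L_{src}(h^*) \le \frac{1}{K}\sum_{k=1}^K L_{\gD^{(k)}}(h^{*(k)}) + \overline{d}_{\gH}(\gD_S^{fed})$. The key ingredient here is convexity of $l$ in $w_h$ together with the averaging aggregation rule $w_h^* = \frac{1}{K}\sum_k w_h^{*(k)}$, which lets one push the loss inside the average via Jensen, and then the $\gH$-discrepancy is used to swap each local model's loss on a foreign domain for its loss on its own domain. Chaining the two displays yields $L^*_{tgt} \le \frac{1}{K}\sum_k L_{\gD^{(k)}}(h^{*(k)}) + \overline{d}_{\gH}(\gD_S^{fed}) + d_{\gG,\gF}(\gD_S,\gD_T)$.

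The one remaining gap is that Lemma~\ref{lem:gen-tsf-ub} is stated with the \emph{single-domain} discrepancy $d_{\gG,\gF}(\gD_S,\gD_T)$, whereas the theorem wants the federated version $d_{\gG,\gF}(\gD_S^{fed},\gD_T) = \frac{1}{K}\sum_k d_{\gG,\gF}(\gD^{(k)},\gD_T)$ from Definition~\ref{def:fed_GF_dvg}. I would handle this by re-deriving the transfer step at the level of individual clients: for each $k$, Lemma~\ref{lem:gen-tsf-ub} applied to $\gD^{(k)}$ gives control of $L_{\gD_T}(g_T\circ f)$ in terms of $\inf_g L_{\gD^{(k)}}(g\circ f)$ and $d_{\gG,\gF}(\gD^{(k)},\gD_T)$; averaging over $k$ and using that $\inf_g L_{\gD^{(k)}}(g\circ f^*) \le L_{\gD^{(k)}}(g^*\circ f^*)$ together with the fact that the target classifier is optimized \emph{after} fixing $f^*$ produces the federated discrepancy. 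In effect this is just observing that the proof of Lemma~\ref{lem:gen-tsf-ub} localizes and that the discrepancy and source-loss terms both average client-wise; alternatively, one notes $\inf_g L_{\gD_T}(g\circ f^*)$ is bounded by the average over $k$ of $\inf_g L_{\gD^{(k)}}(g\circ f^*) + d_{\gG,\gF}(\gD^{(k)},\gD_T)$ directly from the definition of $d_{\gG,\gF}$.

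I expect the main (and really the only) subtlety to be this bookkeeping between single-domain and federated discrepancies: making sure that the averaging is applied consistently to both the source-loss term and the discrepancy term, and that the $\inf_g$ appearing in the definition of $d_{\gG,\gF}$ is correctly related to the actual finetuned classifier $g^*_T$ (which minimizes the target loss for the fixed $f^*$, hence is no worse than the infimum bound). Everything else—Jensen via convexity, the $\gH$-discrepancy swap—is already packaged in Lemma~\ref{lem:gnr-fed-ub}, so the proof should be short: invoke Lemma~\ref{lem:gnr-fed-ub}, invoke the (client-wise) transfer bound, and add.
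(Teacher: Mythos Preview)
Your proposal is correct and matches the paper's proof. The paper does exactly what you describe in your ``fix'': it applies Lemma~\ref{lem:gen-tsf-ub} client-wise to get $L^*_{tgt}\le L_{\gD^{(k)}}(g^*\circ f^*)+d_{\gG,\gF}(\gD^{(k)},\gD_T)$ for each $k$, averages over $k$ to obtain $L^*_{tgt}\le L_{src}(h^*)+d_{\gG,\gF}(\gD_S^{fed},\gD_T)$, and then invokes Lemma~\ref{lem:gnr-fed-ub}; so you can skip the initial detour through the composite source and go directly to the client-wise version.
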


\begin{proof} 
Since
\begin{align*}
   L_{tgt}^* =  L_{\gD_T}\left(g_T^* \circ f^{*}\right) \leq  L_{\gD^{(k)}}\left(g^* \circ f^{*}\right) + d_{\gG, \gF}(\gD^{(k)}, \gD_T),
\end{align*}

we have
\begin{align*}
     L^*_{tgt} &  \leq \frac{1}{K}\sum_{k}\left[L_{\gD^{(k)}}\left(g^* \circ f^{*}\right) + d_{\gG, \gF}(\gD^{(k)}, \gD_T)\right]\\
     &=  L_{src}(h^*) +  d_{\gG,\gF}(\gD_S^{fed}, \gD_T).
\end{align*}
Using Lemma~\ref{lem:gnr-fed-ub}, we have
\begin{align*}
     L^*_{tgt} &  \leq L_{src}(h^*) +  d_{\gG,\gF}(\gD_S^{fed}, \gD_T)\\
     & \leq \frac{1}{K}\sum_{k=1}^K\left[L_{\gD^{(k)}}\left(h^{*(k)}\right)\right] + \overline{d}_{\gH}(\gD_S^{fed})+  d_{\gG,\gF}(\gD_S^{fed}, \gD_T).
\end{align*}

\end{proof}

\begin{lemma}[Bound on round-wise source loss]
\label{lem:round-ub}
Suppose the learning rates of all clients at round $p$ are the same, i.e., $\lambda^{(k)}_p = {\lambda}_{p},\forall k\in[K], p\in[P]$. Under Assumptions~\ref{asp:one-gd} and~\ref{asp:conv-smooth}, we have that
\begin{align}
\label{eq:round-ub}
L_{src}\left(h^{}_{p+1}\right)\leq L_{src}\left(h^{}_{p}\right)
    - \beta_1({\lambda}_{p+1}) \|J_p\|_2^2 + 
     \beta_2({\lambda}_{p+1}) \sigma^2_{p}
\end{align}
where 
$
    J_p = \frac{1}{K}\sum_{k} J^{(k)}_p,\ 
    \sigma^2_{p} = \frac{1}{K}\sum_{k} 
    \left\|J^{(k)}_p\right\|_2^2 - \left\|\frac{1}{K}\sum_{k} J^{(k)}_p\right\|_2^2,
$
and $\beta_1(\lambda) = {\lambda} - \beta_2(\lambda),\ \beta_2(\lambda) = \frac{\alpha{\lambda}^2}{2}$.
\end{lemma}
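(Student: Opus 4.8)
\textbf{Proof proposal for Lemma~\ref{lem:round-ub}.}

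The plan is to exploit the $\alpha$-smoothness of $l$ in $w_h$ together with the single-step update rule, and then separate the global-gradient contribution from the cross-client variance. First I would apply Assumption~\ref{asp:one-gd} with a common learning rate to write the global weight update as $w_{p+1} = w_p - \lambda_{p+1}\cdot\frac{1}{K}\sum_k J_p^{(k)} = w_p - \lambda_{p+1} J_p$, using the averaging aggregation rule from Section~\ref{sec:prelim} and linearity of the gradient over the sum. Next, by $\alpha$-Lipschitz smoothness of $l$ (hence of $L_{src}$, being an average of expectations of $l$) in $w_h$, the descent inequality gives
\begin{align*}
L_{src}(h_{p+1}) \leq L_{src}(h_p) + \langle \nabla L_{src}(h_p),\, w_{p+1}-w_p\rangle + \frac{\alpha}{2}\|w_{p+1}-w_p\|_2^2.
\end{align*}
The true gradient $\nabla L_{src}(h_p)$ evaluated at $w_p$ equals $\frac{1}{K}\sum_k J^{(k)}(w_p) = J_p$ by definition of $J^{(k)}$ and $L_{src}$. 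Substituting the update and this identity yields $L_{src}(h_{p+1}) \leq L_{src}(h_p) - \lambda_{p+1}\|J_p\|_2^2 + \frac{\alpha\lambda_{p+1}^2}{2}\|J_p\|_2^2$.

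At this point the bound only involves $\|J_p\|_2^2$, not the variance, so the remaining step is to re-express things in terms of $\frac{1}{K}\sum_k\|J_p^{(k)}\|_2^2$ and $\sigma_p^2$. Using Definition~\ref{def:cc-stat}, $\frac{1}{K}\sum_k\|J_p^{(k)}\|_2^2 = \|J_p\|_2^2 + \sigma_p^2$, i.e., $\|J_p\|_2^2 = \frac{1}{K}\sum_k\|J_p^{(k)}\|_2^2 - \sigma_p^2$. I would substitute this only into the $\frac{\alpha\lambda_{p+1}^2}{2}\|J_p\|_2^2$ term — wait, that would reintroduce $\frac{1}{K}\sum_k\|J_p^{(k)}\|_2^2$, which is not in the claimed bound. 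Instead the cleaner route: the claim has $-\beta_1\|J_p\|_2^2 + \beta_2\sigma_p^2$ with $\beta_1 = \lambda - \frac{\alpha\lambda^2}{2}$, $\beta_2 = \frac{\alpha\lambda^2}{2}$, so $-\beta_1\|J_p\|_2^2 + \beta_2\sigma_p^2 = -\lambda\|J_p\|_2^2 + \frac{\alpha\lambda^2}{2}\big(\|J_p\|_2^2 + \sigma_p^2\big) = -\lambda\|J_p\|_2^2 + \frac{\alpha\lambda^2}{2}\cdot\frac{1}{K}\sum_k\|J_p^{(k)}\|_2^2$. This suggests the smoothness step should be applied \emph{per client} and then averaged, not to the aggregated update directly: bound $L_{\gD^{(k_1)}}$ along each local update direction, or bound the aggregate using convexity (Assumption~\ref{asp:conv-smooth}(1)) to pull the average outside, giving $L_{src}(h_{p+1}) = \frac{1}{K}\sum_{k_1} L_{\gD^{(k_1)}}(h_{p+1}) \leq \frac{1}{K^2}\sum_{k_1,k_2} L_{\gD^{(k_1)}}(w_p - \lambda_{p+1}J_p^{(k_2)})$, then apply smoothness term-by-term. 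The quadratic term then produces $\frac{1}{K}\sum_k\|J_p^{(k)}\|_2^2$ naturally, and the linear terms average to $-\lambda_{p+1}\|J_p\|_2^2$ since $\frac{1}{K}\sum_{k_1}\nabla L_{\gD^{(k_1)}}(w_p) = J_p$ and $\frac{1}{K}\sum_{k_2}J_p^{(k_2)} = J_p$.

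The main obstacle I anticipate is precisely this bookkeeping: getting the quadratic smoothness term to come out as $\frac{1}{K}\sum_k\|J_p^{(k)}\|_2^2$ (which then splits into $\|J_p\|_2^2 + \sigma_p^2$ via the variance decomposition) rather than $\|J_p\|_2^2$, which requires applying the descent lemma along individual client directions before averaging and correctly using convexity to interchange the average with the loss. Everything else — the update substitution, identifying $\nabla L_{src}(h_p) = J_p$, and the final algebraic regrouping into $\beta_1, \beta_2$ — is routine. I would also note that the common-learning-rate hypothesis is what makes the aggregated step clean; without it the cross-terms would not collapse to $\|J_p\|_2^2$.
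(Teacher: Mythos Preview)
Your settled approach---use convexity to pass the average inside so that $L_{src}(h_{p+1}) \leq \frac{1}{K^2}\sum_{k_1,k_2} L_{\gD^{(k_1)}}(w_p - \lambda_{p+1}J_p^{(k_2)})$, apply $\alpha$-smoothness term by term, average so the linear cross-terms collapse to $-\lambda_{p+1}\|J_p\|_2^2$ and the quadratic terms give $\frac{\alpha\lambda_{p+1}^2}{2}\cdot\frac{1}{K}\sum_k\|J_p^{(k)}\|_2^2$, then split via the variance identity---is exactly the paper's proof.

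One remark worth making: you abandoned your first route too quickly. Since the aggregated weight really is $w_{p+1} = w_p - \lambda_{p+1}J_p$ under the common learning rate, and $L_{src}$ inherits $\alpha$-smoothness from $l$, the direct descent inequality already gives $L_{src}(h_{p+1}) \leq L_{src}(h_p) - \beta_1(\lambda_{p+1})\|J_p\|_2^2$. This is \emph{tighter} than the stated bound, which then follows immediately because $\beta_2(\lambda_{p+1})\sigma_p^2 \geq 0$; and it does not even require the convexity part of Assumption~\ref{asp:conv-smooth}. The paper's double-sum argument (and your final route) makes $\sigma_p^2$ appear organically rather than as pure slack, which matters for the downstream narrative about tuning $\sigma_p^2$, but for proving the lemma as stated your initial inequality already suffices.
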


\begin{proof}
Following the same proof idea as for Lemma~\ref{lem:gnr-fed-ub}, we have
\begin{align}
\label{eq:local_loss_0}
L_{src}\left(h_{p+1}\right) \leq \frac{1}{K^2}\sum_{k_1,k_2}L_{\gD^{(k_1)}}\left(h_{p+1}^{(k_2)}\right)
\end{align}
By definition, we also have
\begin{align}
\label{eq:local_loss_1}
L_{\gD^{(k_1)}}\left(h_{p+1}^{(k_2)}\right)=\sE_{\gD^{(k_1)}}l\left(h\left(x,w_{p+1}^{(k_2)}\right),y\right)
\end{align}
From the update rule of GD, we can write
\begin{align}
l\left(h\left(x,w_{p+1}^{(k_2)}\right),y\right)
& = l\left(h\left(x, w_{p} - \lambda_{p+1}\cdot
\left[\nabla_{w_h} L_{\gD^{(k_2)}}\left(h\right)\bigg|_{w_h = w_{p}}\right]\right),y\right) \notag\\
& =  l\left(h\left(x, w_{p} - \lambda_{p+1}\cdot
\left[\nabla_{w} 
\sE_{\gD^{(k_2)}}l\left(h(x_i,w),y_i\right)
\bigg|_{w = w_{p}}\right]\right),y\right) \label{eq:xi_yi}\\
& =  l\left(h\left(x, w_{p} - \lambda_{p+1}\cdot
J^{(k_2)}_p\right),y\right) \label{eq:local_loss_2}
\end{align}
In~(\ref{eq:xi_yi}), $(x_i,y_i)$ are sampled from $\gD^{(k_2)}$, and not to be confused with $(x, y)$ sampled from $\gD^{(k_1)}$. 

Define $\Delta := -\lambda_{p+1}\cdot J^{(k_2)}_p$, we can upper bound the loss in~(\ref{eq:local_loss_2}) by
\begin{align}
\label{eq:local_loss_3}
l\left(h\left(x, w_{p+1}^{(k_2)}\right),y\right)
& = l\left(h\left(x,w_{p} +\Delta\right),y\right)\notag\\
& \leq l\left(h\left(x,w_{p} \right),y\right) +\left[\nabla_w  l\left(h\left(x, w\right),y\right)\big|_{w = w_{p}}\right]^\top \cdot\Delta + \frac{\alpha}{2}\Delta^\top\Delta.
\end{align}
The inequality holds since $l$ is assumed to have $\alpha$-Lipschitz continuous gradient. Combining ~(\ref{eq:local_loss_1}) and~(\ref{eq:local_loss_3}), we have
\begin{align}
\label{eq:local_loss_4}
L_{\gD^{(k_1)}}\left(h_{p+1}^{(k_2)}\right)
&\leq \sE_{\gD^{(k_1)}}
\left[l\left(h\left(x,w_{p} \right),y\right) +\left[\nabla_w  l\left(h\left(x,w \right),y\right)\big|_{w = w_{p}}\right]^\top \cdot\Delta + \frac{\alpha}{2}\Delta^\top\Delta\right]\notag\\
& = L_{\gD^{(k_1)}}\left(h_{p}\right) + \left[J^{(k_1)}_p\right]^\top\cdot\Delta + \frac{\alpha}{2}\Delta^\top\Delta
\end{align}

Next, by ~(\ref{eq:local_loss_0}) and~(\ref{eq:local_loss_4}), we have
\begin{align}
\label{eq:fl_one_round_loss_1}
\quad L_{src}\left(h_{p+1}\right) 
&\leq \frac{1}{K^2}\sum_{k_1,k_2}
L_{\gD^{(k_1)}}\left(h_{p+1}^{(k_2)}\right)\notag\\
& \leq \frac{1}{K^2}\sum_{k_1, k_2}
\left[L_{\gD^{(k_1)}}\left(h_{p}\right) + \left[J^{(k_1)}_p\right]^\top\cdot\Delta + \frac{\alpha}{2}\Delta^\top\Delta\right]\notag\\
& = \frac{1}{K^2}\sum_{k_1, k_2}
\left[L_{\gD^{(k_1)}}\left(h_{p}\right) 
- \lambda_{p+1}\cdot\left[J^{(k_1)}_p\right]^\top\cdot
J^{(k_2)}_p\right. \notag\\
& \left. \quad + 
\frac{\alpha}{2}\cdot(\lambda_{p+1})^2\cdot\left\|
J^{(k_2)}_p\right\|_2^2\right] \notag\\
& = L_{src}\left(h_{p}\right) - \lambda_{p+1}\|J_p\|_2^2 
+ \frac{\alpha}{2K}\cdot\left(\lambda_{p+1}\right)^2 \cdot \sum_{k_2} \left\|
J^{(k_2)}_p\right\|_2^2.
\end{align}
The last equality follows from the definition of $J_p$.

Define $\sigma^2_{p} := \fK \sum_{k=1}^K \left\|
J^{(k)}_p \right\|_2^2 - \left\|J_p\right\|_2^2
$. Then, we have
\begin{align}
\label{eq:fl_one_round_loss_2}
L_{src}\left(h_{p+1}\right) 
& \leq L_{src}\left(h_{p}\right) - \lambda_{p+1}\|J_p\|_2^2 
+ \frac{\alpha}{2} \cdot (\lambda_{p+1})^2 \cdot \left(\sigma^2_{p} + \|J_p\|_2^2 \right) \notag\\
& = L_{src}\left(h_{p}\right) 
- \left(\lambda_{p+1}
- \frac{\alpha\cdot(\lambda_{p+1})^2}{2}\right)\|J_p\|_2^2 
+  \frac{\alpha\cdot(\lambda_{p+1})^2}{2}\sigma^2_{p} \notag\\ 
& = L_{src}\left(h_{p}\right) 
- \beta_1(\lambda_{p+1})\|J_p\|_2^2 
+ \beta_2(\lambda_{p+1})\sigma^2_{p},
\end{align}
which completes the proof.
\end{proof}

\begin{lemma}[Lemma \ref{lem:ub-tgtloss}]
    Under Assumptions~\ref{asp:one-gd} and~\ref{asp:conv-smooth}, and the cross-client statistics defined in Definition ~\ref{def:cc-stat}, after $P$ rounds of federated pretraining one has
    \begin{align}
        \label{eq:Mrd-tdf-fed-ub-rstt}
        \widehat{L}^*_{tgt}\leq L_{src}\left(h_{0}\right)
        -\sum_{p=0}^{P-1}
        \beta_1({\lambda}_{p+1})
        \|J_p\|_2^2 + \sum_{p=0}^{P-1}
        \beta_2({\lambda}_{p+1})
        \sigma^2_{p} + d_{\gG, \gF}(\gD_S^{fed},\gD_T),
    \end{align}
    where $h_{0}$ is the initial global model and $\beta_2(\lambda) = \frac{\alpha{\lambda}^2}{2},\ \beta_1(\lambda) = {\lambda} - \beta_2(\lambda)$. 
\end{lemma}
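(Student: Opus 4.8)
The plan is to chain together the per-round bound from Lemma~\ref{lem:round-ub} across all $P$ rounds and then close the gap to the target loss using the transfer bound from Lemma~\ref{lem:gen-tsf-ub} (adapted to the federated setting via Definition~\ref{def:fed_GF_dvg}). First I would invoke Lemma~\ref{lem:round-ub}, which gives for each $p\in\{0,\dots,P-1\}$ the one-step inequality
\begin{align*}
L_{src}\left(h_{p+1}\right)\leq L_{src}\left(h_{p}\right)
    - \beta_1({\lambda}_{p+1}) \|J_p\|_2^2 + \beta_2({\lambda}_{p+1}) \sigma^2_{p}.
\end{align*}
Telescoping this over $p=0,1,\dots,P-1$ collapses the $L_{src}(h_p)$ terms and yields
\begin{align*}
L_{src}\left(h_{P}\right)\leq L_{src}\left(h_{0}\right)
    - \sum_{p=0}^{P-1}\beta_1({\lambda}_{p+1}) \|J_p\|_2^2 + \sum_{p=0}^{P-1}\beta_2({\lambda}_{p+1}) \sigma^2_{p}.
\end{align*}

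Next I would bridge from the source loss of the final pretrained global model $h_P = g_P\circ f_P$ to the estimated optimal target loss $\widehat{L}^*_{tgt}$. Here I apply the federated version of Lemma~\ref{lem:gen-tsf-ub}: since $\widehat{L}^*_{tgt}$ is by definition the target loss incurred by fine-tuning a fresh classifier on top of $f_P$, for each client $k$ we have $\widehat{L}^*_{tgt} = L_{\gD_T}(g_{T}\circ f_P) \le \inf_{g}L_{\gD^{(k)}}(g\circ f_P) + d_{\gG,\gF}(\gD^{(k)},\gD_T) \le L_{\gD^{(k)}}(g_P\circ f_P) + d_{\gG,\gF}(\gD^{(k)},\gD_T)$. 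Averaging over $k\in[K]$ and using Definition~\ref{def:fed_GF_dvg} gives $\widehat{L}^*_{tgt}\le L_{src}(h_P) + d_{\gG,\gF}(\gD_S^{fed},\gD_T)$. Substituting the telescoped bound on $L_{src}(h_P)$ then produces exactly~(\ref{eq:Mrd-tdf-fed-ub-rstt}).

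One subtlety I would flag is that Lemma~\ref{lem:round-ub} is stated under the hypothesis that all clients share the same learning rate $\lambda_p$ within round $p$ (and Assumption~\ref{asp:one-gd}'s single-step update), so I would note that the telescoping argument inherits that assumption; the coefficients $\beta_1,\beta_2$ carry through unchanged. The main obstacle is not the telescoping itself (which is routine) but getting the direction of the transfer inequality right in the federated averaging context — specifically justifying that the aggregated global model $h_P$ controls each per-client term and that $\inf_g L_{\gD^{(k)}}(g\circ f_P)\le L_{\gD^{(k)}}(g_P\circ f_P)$ trivially. Everything else is algebraic bookkeeping, and the result follows immediately once these two ingredients (telescoped descent plus averaged transfer bound) are in place.
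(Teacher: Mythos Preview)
Your proposal is correct and follows essentially the same route as the paper's proof: telescope Lemma~\ref{lem:round-ub} over $p=0,\dots,P-1$ to bound $L_{src}(h_P)$, then pass to $\widehat{L}^*_{tgt}$ via the per-client transfer inequality (Lemma~\ref{lem:gen-tsf-ub}) averaged over $k\in[K]$ using Definition~\ref{def:fed_GF_dvg}. Your handling of the bridge step is in fact slightly more explicit than the paper's, which simply cites the argument from Theorem~\ref{thrm:gen-ub}; your intermediate inequality $\inf_g L_{\gD^{(k)}}(g\circ f_P)\le L_{\gD^{(k)}}(g_P\circ f_P)$ and the averaging step are exactly what is needed.
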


\begin{proof}
From Lemma \ref{lem:round-ub}, we have
\begin{align*}
    L_{src}\left(h_{P}\right) &\leq L_{src}\left(h_{P-1}\right) 
- \beta_1(\lambda_{P})\|J_{p-1}\|_2^2 
+ \beta_2(\lambda_{P})\sigma^2_{P-1}\\
& \leq  L_{src}\left(h_{0}\right)
        -\sum_{p=0}^{P-1}
        \beta_1({\lambda}_{p+1})
        \|J_p\|_2^2 + \sum_{p=0}^{P-1}
        \beta_2({\lambda}_{p+1})
        \sigma^2_{p}.
\end{align*}

Following the same proof as the one outlined for Theorem~\ref{thrm:gen-ub}, we obtain
\begin{align*}
    L_{tgt}(h_{P}) \leq L_{src}(h_{P}) + d_{\gG, \gF}(\gD_S^{fed}, \gD_T).
\end{align*}
Therefore,
\begin{align*}
    \widehat{L}^*_{tgt} = L_{tgt}(h_{P}) & \leq L_{src}(h_{P}) + d_{\gG, \gF}(\gD_S^{fed}, \gD_T)\\
    & \leq L_{src}\left(h_{0}\right)
        -\sum_{p=0}^{P-1}
        \beta_1({\lambda}_{p+1})
        \|J_p\|_2^2 + \sum_{p=0}^{P-1}
        \beta_2({\lambda}_{p+1})
        \sigma^2_{p}+ d_{\gG, \gF}(\gD_S^{fed}, \gD_T).
\end{align*}
\end{proof}


\section{Additional Experimental  Results}
\label{apdx:addi-exp}
{\textbf{Computing resources}. We used an NVIDIA GeForce RTX 3090 Ti GPU with a memory of 24247MiB. One run of experimental evaluation takes approximately $6$ hours for CIFAR10 $\to$ SVHN for the small client setting.}




\textbf{Fraction of Clients Participating in FL Rounds}. Besides the results generated using $20\%$ of the participating clients reported in the main text, we also report the results pertaining to $10\%$ and $100\%$ participating clients in Table~\ref{tab:ptcp}. A larger fraction of participants helps with improving transferability, i.e., as expected,  $100\%$ participation outperforms the setting with $10\%$ participation. However, we note that for $20\%$ of participating clients we already reach a performance comparable to that involving $100\%$ of the participants.

\begin{table}[ht]
    \centering
\begin{tabular}{cccc}
\hline
\cline{1-4}
Method& Coefficient & { $10\%$ participants} & { $100\%$ participants} \\
\cline{1-4}
{FedAVG}  &  NA & 0.52 $\pm$ 0.01 & 0.56 $\pm$ 0.03 \\
  & 1e-4 & 0.52$\pm$ 0.02& { {0.58}}$\pm$ 0.01\\
FedGTST & 5e-4 & \textbf{{0.58} $\pm$ 0.04} & \textbf{ 0.63$\pm$ 0.05}  \\ 
 & 1e-3 & { 0.53}$\pm$ 0.01 & { 0.60}$\pm$ 0.06\\
 \hline
\cline{1-4}
\end{tabular}
\vspace{0.04in}
\caption{Transferability versus the fraction of participating clients in each round. We report the results for CIFAR10 $\to$ SVHN on the LeNet backbone, with $K=100$.}
\label{tab:ptcp}
\end{table}

\textbf{Convergence Results.} Convergence results are plotted in Figure\blue{~\ref{fig:cvg}}. We observe that a model pretrained via FedGTST \blue{not only offers} better transferability than baselines, but also tends to converge faster during the finetuning stage (i.e., the green lines in all plots always converge faster than the grey dashed lines).

\begin{figure*}[ht]
    \centering
    \includegraphics[height=1.9in]{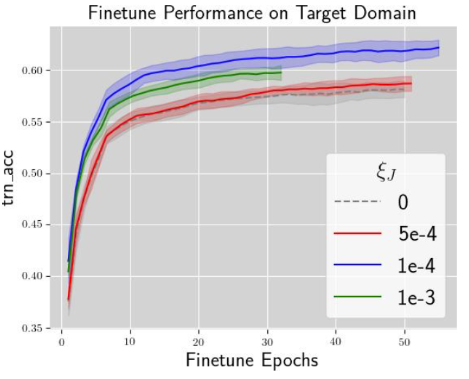}
    \includegraphics[height=1.9in]{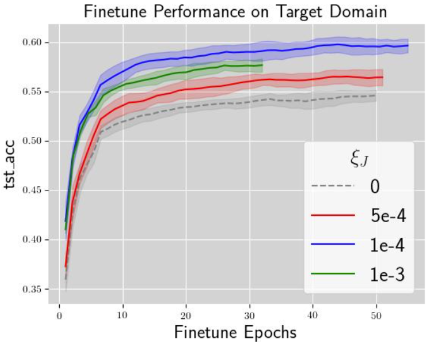}
     \includegraphics[height=1.9in]{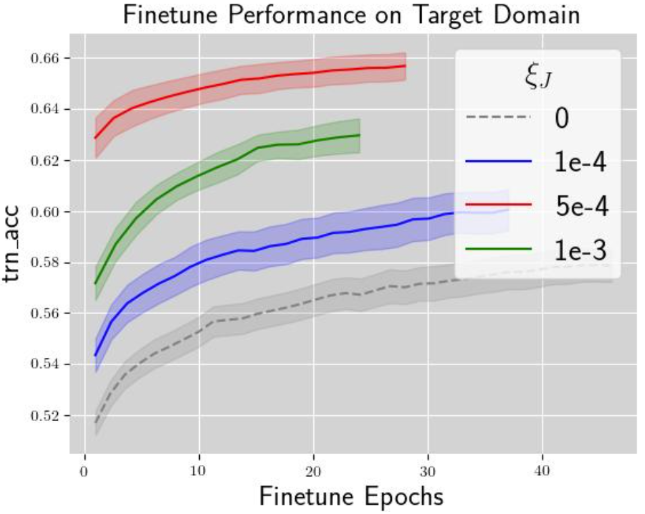}
    \includegraphics[height=1.9in]{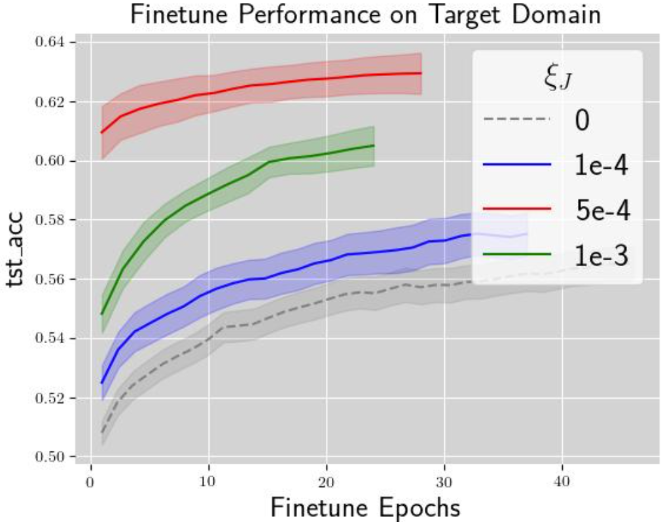}
    \caption{Visualization of Convergence Results. We use CIFAR10 $\to$ SVHN with $K=100$ as an example. The top two plots correspond to a fraction of $10\%$ of participating clients, while the bottom two plots correspond to $100\%$ participation. We report the training and test accuracy along with finetuned epochs for both settings. The grey dashed lines represent FedAVG, where the coefficient for the regularizer term is set to $0$. Other lines represent FedGTST with tuned coefficients.}
    \label{fig:cvg}
\end{figure*}

\textbf{Local Epochs}. Besides the results for single local client epochs presented in main text, here we also report the results for multiple local epochs in Table ~\ref{tab:le}. A smaller number of local epochs tends to improve transferability more than a larger number of epochs, which is consistent with the intuition that FL pretraining can avoids local overfitting when using fewer epochs.

\begin{table}[ht]
    \centering
\begin{tabular}{cccc}
\hline
\cline{1-4}
  &Coeffient & 1 local epoch & 10 local epochs\\
\cline{1-4}
FedAVG&NA&0.54$\pm$0.05 & 0.52$\pm$0.07\\
 & 1e-4  &\textbf{0.60$\pm$0.01} & 0.52$\pm$0.04\\
FedGTST & 5e-4  & 0.56$\pm$0.06& {\textbf{0.58$\pm$0.02}} \\
 & 1e-3  & 0.58$\pm$0.05& { 0.53$\pm$0.02}\\
\hline
\cline{1-4}
\end{tabular}
\vspace{0.04in}
\caption{Transferability v.s. local number of epochs. We report results for CIFAR10$\to$SVHN with the LeNet backbone, for $10\%$ of participating clients and $K=100$.}
    \label{tab:le}
\end{table}

\textbf{Cross-Client Statistics}. We plot the cross-client averaged Jacobian norm $\|J_p\|_2$ and the cross-client Jacobian variance $\sigma_p^2$ in Figure~\ref{fig:ccs}. The observation is that FedGTST leads to a significantly larger $\|J_p\|_2$ and a significantly smaller $\sigma_p^2$ compared to FedAVG. A more detailed explanation of the findings (i.e., the reason for truncating the x-axis in the left plot, the FedGTST coefficient selection approach and faster convergence results in the right plot) is available in the caption of Figure~\ref{fig:ccs}.
\begin{figure*}[ht]
    \centering
    \includegraphics[height=1.9in]{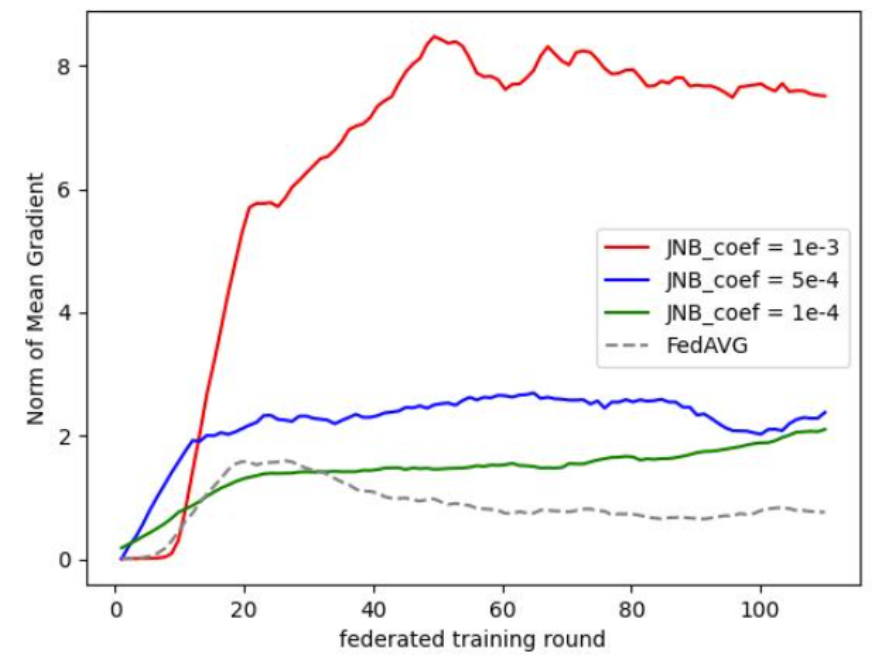}
    \includegraphics[height=1.96in]{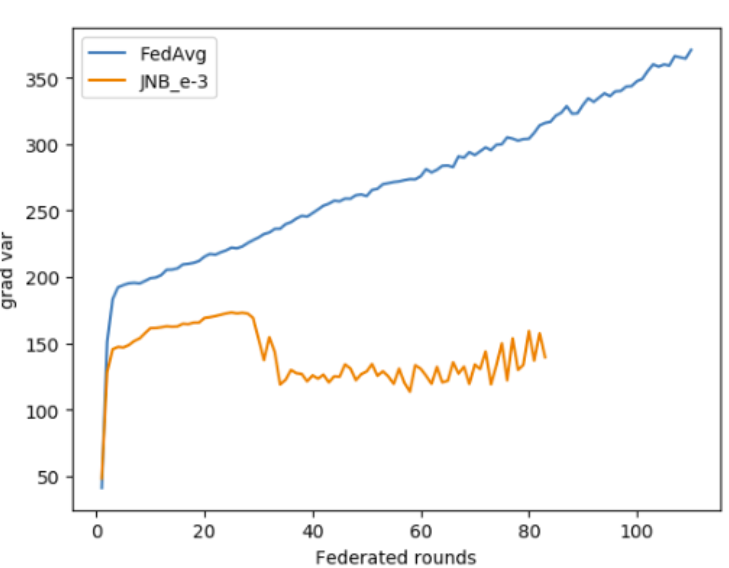}
    \caption{Cross-client statistics tuning via FedGTST. We use CIFAR10$\to$SVHN with $K=100$ as an example. The left plot reports the global Jacobian (gradient) norm versus the index of the federated round. The grey dashed line represents FedAVG, while other lines correspond to FedGTST with different coefficients.
    We truncate the plot to only capture the results of the first $100$ rounds, since at the end of training the gradient norm should drop to a value close to $0$ due to convergence, and we are only interested in observing the behaviour of Jacobian norms during relative early pretraining stages. 
    We select the best-performing setup from the left plot (the red line with coefficient $1e-3$), and then in the right plot, compare its variance during a federated round with that of FedAVG. The blue line represents FedAVG and the yellow line corresponds to FedGTST. The yellow line terminated earlier since all experiments are averaged over $3$ runs and aligned with the run that converges the fastest.}
    \label{fig:ccs}
\end{figure*}

\blue{\textbf{Additional Dataset: DomainNet}. Following FedSR, we apply a leave-one-out strategy, where one domain is treated as the target domain and the other five are source domains, allocated to five individual clients.
Results are reported in Table \ref{tab:DomainNet}, where the target domains listed are: C (Clipart), I (Infograph), P (Painting), Q (Quickdraw), R (Real), and S (Sketch).}

\begin{table}[ht]
\centering
\begin{tabular}{ccccccc}
\hline
\cline{1-7}

& \multicolumn{6}{c}{{Leave-One-Out Domain}} \\ 
\cline{2-6}
\multirow{-2}{*}{Model} & \multicolumn{1}{c}{{ C}} & \multicolumn{1}{c}{{ I}} & \multicolumn{1}{c}{{ P}} & \multicolumn{1}{c}{{ Q}} & \multicolumn{1}{c}{{ R}} & \multirow{-2}{*}{Average} \\ 
\hline
{ FedAVG}   & { 59.3$\pm$0.7}  & { 16.5$\pm$0.9} & { 44.2$\pm$0.7} & { 10.8$\pm$1.8} & { 57.2$\pm$0.8} & 39.6 \\
{ FedSR}    & { 61.0$\pm$0.6}  & { 18.6$\pm$0.4} & { 45.2$\pm$0.5} & { 13.4$\pm$0.6} & { 57.6$\pm$0.2} & 41.3 \\
{ FedGTST}  & { \textbf{63.9$\pm$0.5}} & { \textbf{20.7$\pm$0.3}} & { \textbf{47.8$\pm$0.4}} & { \textbf{15.2$\pm$0.5}} & { \textbf{59.5$\pm$0.6}} & \textbf{43.6} \\ 
\hline
\cline{1-7}
\end{tabular}
\vspace{0.04in}
\caption{\blue{Comparison of different federated models on intra-domain transfer tasks of DomainNet. FedGTST consistently outperforms the other methods.}}
\label{tab:DomainNet}
\end{table}

\blue{\textbf{Additional Baseline: Scaffold}. FedGTST consistently outperforms Scaffold, as presented in Table \ref{tab:scaffold-10} and \ref{tab:scaffold-100}.}

\begin{table}[ht]
\centering
\begin{tabular}{cccccc}
\hline
\cline{1-6} 
& \multicolumn{2}{c}{{ MNIST→ MNIST-M}} & \multicolumn{2}{c}{{ CIFAR10 → SVHN}}&\\ 
\cline{2-5}
\multirow{-2}{*}{Method}& { LeNet}& { ResNet}& { LeNet}& { ResNet}& \multirow{-2}{*}{Average} \\ 
\hline
{ Scaffold} & { 75.6} $\pm$ 0.8 & { 80.8} $\pm$ 0.3 & { 66.0} $\pm$ 0.5 & { 71.1} $\pm$ 0.4 & 73.3 \\
{ FedGTST} & { \textbf{76.2} $\pm$ 0.9} & { \textbf{82.3} $\pm$ 0.5} & { \textbf{70.1} $\pm$ 0.8} & { \textbf{74.5} $\pm$ 0.3} & \textbf{75.8} \\ 
\hline
\cline{1-6}
\end{tabular}
\vspace{0.04in}
\caption{\blue{Comparison between Scaffold and FedGTST. Number of clients is 10.}}
\label{tab:scaffold-10}
\end{table}

\begin{table}[ht]
\centering
\begin{tabular}{cccccc}
\hline
\cline{1-6} 
& \multicolumn{2}{c}{{ MNIST→ MNIST-M}} & \multicolumn{2}{c}{{ CIFAR10 → SVHN}}&\\ 
\cline{2-5}
\multirow{-2}{*}{Method}& { LeNet}& { ResNet}& { LeNet}& { ResNet}& \multirow{-2}{*}{Average} \\ 
\hline
{ Scaffold} & { 52.3} $\pm$ 0.5 & { 63.1} $\pm$ 0.3 & { 45.5} $\pm$ 0.1 & { 55.5} $\pm$ 0.3 & 54.1 \\
{ FedGTST} & { \textbf{57.5} $\pm$ 0.3} & { \textbf{67.6} $\pm$ 0.2} & { \textbf{52.4} $\pm$ 0.1} & { \textbf{63.1} $\pm$ 0.2} & \textbf{60.2} \\ 
\hline
\cline{1-6}
\end{tabular}
\vspace{0.04in}
\caption{\blue{Comparison between Scaffold and FedGTST. Number of clients is 100.}}
\label{tab:scaffold-100}
\end{table}

\blue{\textbf{Dirichlet Sampling}. We set the concentration parameter to 0.5 and the number of parties to 10 by default. The results in Table \ref{tab:noniid-10} and \ref{tab:noniid-100} indicate that FedGTST still outperforms others when individual domains are constructed via Dirichlet sampling.}

\begin{table}[ht]
\centering
\begin{tabular}{cccccc}
\hline
\cline{1-6} 
& \multicolumn{2}{c}{{ MNIST→ MNIST-M}} & \multicolumn{2}{c}{{ CIFAR10 → SVHN}}&\\ 
\cline{2-5}
\multirow{-2}{*}{Method}& { LeNet}& { ResNet}& { LeNet}& { ResNet}& \multirow{-2}{*}{Average} \\ 
\hline
{ FedAVG}& { 74.1}$\pm$0.6 & { 82.2}$\pm$0.3& { 65.2}$\pm$0.4& { 72.8}$\pm$0.9 & 73.5 \\
{ FedSR}& { 75.5}$\pm$0.8 & { 82.0}$\pm$0.2 & { 66.1}$\pm$0.5 & { 72.1}$\pm$0.3 & 73.9 \\
{ FedIIR} & { 75.8}$\pm$0.2& { \textbf{82.8}$\pm$0.6}& { 66.5}$\pm$0.9& { 74.6}$\pm$0.1& 74.9\\
{ Fed-GTST} & { \textbf{77.3}$\pm$0.8} & { 82.7}$\pm$0.4 & { \textbf{70.8}$\pm$0.7} & { \textbf{75.2}$\pm$0.2} & \textbf{76.5} \\ 
\hline
\cline{1-6}
\end{tabular}
\vspace{0.04in}
\caption{\blue{Comparison of target accuracy (\%) across different methods on MNIST→MNIST-M and CIFAR10→SVHN tasks. 10 individual domains are constructed by Dirichlet sampling.}}
\label{tab:noniid-10}
\end{table}

\begin{table}[ht]
\centering
\begin{tabular}{cccccc}
\hline
\cline{1-6} 
& \multicolumn{2}{c}{{ MNIST→ MNIST-M}} & \multicolumn{2}{c}{{ CIFAR10 → SVHN}}&\\ 
\cline{2-5}
\multirow{-2}{*}{Method}& { LeNet}& { ResNet}& { LeNet}& { ResNet}& \multirow{-2}{*}{Average} \\ 
\hline
{ FedAVG} & { 50.4}$\pm$0.1 & { 63.0}$\pm$0.3 & { 43.3}$\pm$0.2 & { 54.6}$\pm$0.5 & 52.8 \\
{ FedSR} & { 52.7}$\pm$0.2 & { 62.9}$\pm$0.3 & { 44.7}$\pm$0.1 & { 56.5}$\pm$0.3 & 54.2 \\
{ FedIIR} & { 54.2}$\pm$0.4 & { 64.1}$\pm$0.1 & { 47.4}$\pm$0.4 & { 58.7}$\pm$0.2 & 56.1 \\
{ Fed-GTST} & { \textbf{59.5}$\pm$0.3} & { \textbf{69.2}$\pm$0.2} & { \textbf{55.1}$\pm$0.1} & { \textbf{65.6}$\pm$0.2} & \textbf{62.4} \\ 
\hline
\cline{1-6}
\end{tabular}
\vspace{0.04in}
\caption{\blue{Comparison of target accuracy (\%) across different methods on MNIST→MNIST-M and CIFAR10→SVHN tasks. 100 individual domains are constructed by Dirichlet sampling.}}
\label{tab:noniid-100}
\end{table}


\section{Discussion of the Positivity of Coefficients Requirement for Cross-Client Statistics}
\label{apdx:pstv-coef}
\begin{lemma}[Bound on round-wise source loss]
\label{lem:round-ub-rstt}
Suppose the learning rates of all clients at round $p$ are the same: $\lambda^{(k)}_p = {\lambda}_{p},\forall k\in[K], p\in[P]$. When Assumption~\ref{asp:one-gd} and~\ref{asp:conv-smooth} hold, we have that
\begin{align}
\label{eq:round-ub-rstt}
L_{src}\left(h^{}_{p+1}\right)\leq L_{src}\left(h^{}_{p}\right)
    - \beta_1({\lambda}_{p+1}) \|J_p\|_2^2 + 
     \beta_2({\lambda}_{p+1}) \sigma^2_{p}
\end{align}
where 
$
    J_p = \frac{1}{K}\sum_{k} J^{(k)}_p,\ 
    \sigma^2_{p} = \frac{1}{K}\sum_{k} 
    \left\|J^{(k)}_p\right\|_2^2 - \left\|\frac{1}{K}\sum_{k} J^{(k)}_p\right\|_2^2,
$
and $\beta_1(\lambda) = {\lambda} - \beta_2(\lambda),\ \beta_2(\lambda) = \frac{\alpha{\lambda}^2}{2}$.

\end{lemma}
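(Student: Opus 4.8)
\textbf{Proof proposal for Lemma~\ref{lem:round-ub-rstt}.}

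The plan is to reprove the round-wise source loss bound exactly as in Lemma~\ref{lem:round-ub} (which this lemma restates verbatim), leaning on the two standing assumptions: Assumption~\ref{asp:one-gd} (single-step local update) fixes the form of the post-training local weights as $w_{p+1}^{(k)} = w_p - \lambda_{p+1} J^{(k)}_p$, and Assumption~\ref{asp:conv-smooth} gives both convexity and $\alpha$-Lipschitz smoothness of $l$ in $w_h$. First I would open with the same convexity-of-$l$ trick used in Lemma~\ref{lem:gnr-fed-ub}: because the aggregated global weight is the average $w_{p+1} = \frac{1}{K}\sum_{k_2} w_{p+1}^{(k_2)}$, Jensen's inequality yields $L_{src}(h_{p+1}) \le \frac{1}{K^2}\sum_{k_1,k_2} L_{\gD^{(k_1)}}(h_{p+1}^{(k_2)})$, reducing the problem to bounding each cross term $L_{\gD^{(k_1)}}(h_{p+1}^{(k_2)})$.

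Next I would apply the $\alpha$-smoothness descent inequality pointwise inside the expectation: writing $\Delta := -\lambda_{p+1} J^{(k_2)}_p$, smoothness gives $l(h(x,w_p+\Delta),y) \le l(h(x,w_p),y) + [\nabla_w l(h(x,w),y)|_{w_p}]^\top \Delta + \tfrac{\alpha}{2}\Delta^\top\Delta$, and taking $\sE_{\gD^{(k_1)}}$ turns the gradient term into $[J^{(k_1)}_p]^\top \Delta$. Summing over $k_1,k_2$ and dividing by $K^2$, the cross term $-\lambda_{p+1}\,\frac{1}{K^2}\sum_{k_1,k_2}[J^{(k_1)}_p]^\top J^{(k_2)}_p$ collapses to $-\lambda_{p+1}\|J_p\|_2^2$ by the definition $J_p = \frac{1}{K}\sum_k J^{(k)}_p$, while the quadratic term becomes $\frac{\alpha}{2}\lambda_{p+1}^2 \cdot \frac{1}{K}\sum_{k}\|J^{(k)}_p\|_2^2$. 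Finally I would substitute the variance decomposition $\frac{1}{K}\sum_k \|J^{(k)}_p\|_2^2 = \sigma_p^2 + \|J_p\|_2^2$ and collect terms, giving the coefficient of $-\|J_p\|_2^2$ as $\lambda_{p+1} - \tfrac{\alpha}{2}\lambda_{p+1}^2 = \beta_1(\lambda_{p+1})$ and the coefficient of $\sigma_p^2$ as $\tfrac{\alpha}{2}\lambda_{p+1}^2 = \beta_2(\lambda_{p+1})$, which is the claimed bound.

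Since this lemma is a verbatim restatement of Lemma~\ref{lem:round-ub}, there is no genuinely new obstacle; the only point requiring any care is the bookkeeping in the double sum, in particular keeping straight that $(x_i,y_i)$ appearing in the gradient $J^{(k_2)}_p$ are sampled from $\gD^{(k_2)}$ while the outer expectation $(x,y)$ is over $\gD^{(k_1)}$, so that the gradient term evaluates to the inner product of the two distinct client Jacobians and not a squared norm. I would simply refer back to the proof of Lemma~\ref{lem:round-ub} for the details, noting that the statement and hypotheses coincide.
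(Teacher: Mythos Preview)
Your proposal is correct and follows essentially the same approach as the paper's proof of Lemma~\ref{lem:round-ub}: the convexity/Jensen step to reduce to cross terms, the pointwise $\alpha$-smoothness descent inequality followed by expectation over $\gD^{(k_1)}$, the double-sum collapse $-\lambda_{p+1}\,\frac{1}{K^2}\sum_{k_1,k_2}[J^{(k_1)}_p]^\top J^{(k_2)}_p = -\lambda_{p+1}\|J_p\|_2^2$, and the variance decomposition all match exactly. Your caveat about keeping $(x_i,y_i)\sim\gD^{(k_2)}$ distinct from $(x,y)\sim\gD^{(k_1)}$ is precisely the one the paper flags as well.
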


{\remark Lemma \ref{lem:round-ub-rstt} provides an upper bound of  \textit{current-round cross-client loss} (the proof is given in Appendix~\ref{app:lem:round-ub}), indicating that a smaller \textit{current-round cross-client loss} (LHS) may be influenced by factors such as a small \textit{last-round cross-client loss}, a large \textit{cross-client average Jacobian norm}, and a small \textit{cross-client Jacobian variance} (RHS).
More precisely, the LHS is the \textit{cross-client current-round loss}, which is upper bounded by three terms on the RHS, 

1) \textit{The Loss Term}: $L_{src}\left(h_{p}^{(0)}\right)$ is the \textit{last-round cross-client loss}.

2) \textit{The Variance Term}: $\sigma^2_{p}$ measuring the variance of local gradients across all nodes. 

3) \textit{The Norm Term}: $\|J_p\|^2_2$ measures the squared \textit{cross-client average Jacobian norm}. 

\textbf{Positivity of the Coefficients}.
It is straightforward to see that $\beta_2({\lambda}_{p+1})=\frac{\alpha(\lambda_{p+1})^2}{2}$, the coefficient in front of $\sigma_p^2$, is always positive. Therefore, we focus on  $\beta_2({\lambda}_{p+1})$, the coefficient in front of $\|J_p\|_2$. Denote the RHS of Equation~\ref{eq:round-ub-rstt} by $\gB$, so that $L_{src}(h_{p+1}) \leq \gB$. To allow such a bound to be used as an indicator that the source loss is decreasing with the number of training rounds, we require $\gB \leq L_{src}(h_p)$, since only in this way can the bound give $L_{src}(h_{p+1}) \leq \gB \leq L_{src}(h_p)$. Thus, we require $\beta_1(\lambda_{p+1})$ to be positive, since otherwise $\gB \leq L_{src}(h_p)$ cannot be meet. We describe in what follows that a positive $\beta_1(\lambda_{p+1})$ is indeed possible in practice.

\textbf{Realistic scenarios in which $\beta_1(\lambda_{p+1})>0$}.} To require $\beta_1(\lambda_{p+1})>0$ is equivalent to require $\lambda_{p+1} < \frac{2}{\alpha}$. This requirement can be easily met since for any model of a known mathematical form based on a second-order differentiable loss, we can easily get the lower bound for $\alpha$ once we observed all training data. Using linear regression as an example, where $l(x,y;w) = (wx-y)^2$, we have $\frac{d^2l}{dw^2} = 2x$, therefore, in this case we can simply control $\lambda_{p+1}\leq \frac{1}{\max_{x\in \gX}\|x\|}$ to approximately guarantee $\lambda_{p+1} < \frac{2}{\alpha}$. This then ensures $\beta_1(\lambda_{p+1})>0$.

\section{Optimal Learning Rate}
\textit{Choosing the Optimal Learning Rate}. Lemma~\ref{lem:round-ub} shows that the upper bound of federated loss at round $p$ is a quadratic function w.r.t the learning rate $\lambda_{p}$. Therefore, a good learning rate at each round needs to be chosen to minimize the upper bound. For simplicity of notation, we use $B_{p+1}(\lambda_{p+1})$ as a shorthand for the upper bound shown in~(\ref{eq:round-ub}).

The following two observations are in place for $B_{p+1}(\lambda_{p+1})$:
\begin{itemize}
    \item When $0 < \lambda_{p+1} < \frac{2K\cdot\|J_p\|_2^2}{\alpha \cdot \sum_k\|J_p^{(k)}\|_2^2} \leq \frac{2}{\alpha}$, it holds that $L_{src}\left(h_{p+1}^{(0)}\right) \leq B_{p+1}(\lambda_{p+1}) < L_{src}\left(h_{p}^{(0)}\right)$, indicating that the federated loss is decreasing with the number of rounds.

    \item By minimizing $B_{p+1}(\lambda_{p+1})$ w.r.t. $\lambda_{p+1}$, we have 
    \begin{align*}
        \lambda_{p+1}^* = \frac{K\cdot\|J_p\|_2^2}{\alpha \cdot \sum_k\|J_p^{(k)}\|_2^2},\quad B_{p+1}(\lambda_{p+1}^*) = L_{src}\left(h_p^{(0)}\right)-\frac{K \cdot \|J_p\|_2^4}{2\alpha\cdot \sum_k\|J_p^{(k)}\|_2^2}.
    \end{align*}
\end{itemize}

\section{\blue{Stochastic Gradient Descend}}
\label{sec:apdx-sgd}
\blue{Assumption \ref{asp:one-gd} on one step of gradient descent can be extended to stochastic learning with batch sampling. The additional randomness in sampling would require incorporating the variance of batch sampling into the generalization bound. This variance term, being independent of the algorithm design, was omitted in our theoretical analysis.}

\section{Dataset Description}
\label{apdx:ds-intro}
MNIST comprises $60,000$ $28\times28$ grayscale images of handwritten digits (0 through 9); MNIST-M is created by combining MNIST digits with the patches randomly extracted from color photos of BSDS500 as their background, containing $59,001$ training images.
The CIFAR-10 dataset consists of $60,000$ $32\times32$ colour images from $10$ classes. SVHN contains $73,257$ $32\times32$ colored digits obtained from house numbers in Google Street View images.

\section{Non-iid FL Models}
\label{apdx:FL-noniid}
For the source domains MNIST and CIFAR-10, we only allow each local client to have access to two out of ten classes (e.g., for the digit dataset (0-9), one client may only have access to say digits 3 and 6). We let each client randomly chooses their labels and samples following a uniform distribution. See the example in Fig.~\ref{fig:label-dist} 
\begin{figure}[ht]
    \centering
    \includegraphics[width=0.3\linewidth]{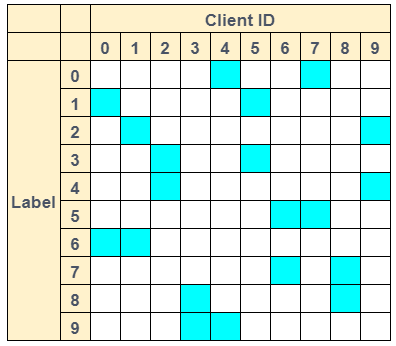}
    \caption{An example for constructing a non-iid marginal distribution for the  \textit{Cifar10} dataset allocated to $10$ clients. Each client has access to only two labels. We also make sure that no samples are used by more than one client.}
    \label{fig:label-dist}
\end{figure}


\section{\blue{Additional Related Work}}
\subsection{\blue{Gradient Matching in Transfer Learning}}
\label{subsec:apdx-lit-grad-match}
   \blue{\textbf{Discussion.} Gradient matching in transfer learning focuses on aligning gradients between source and target domains to facilitate better domain adaptation. \citet{ShiSTNHUS22} demonstrate that matching gradients across domains enhances domain generalization by making the learned representations more resilient to domain shifts. Extending this idea,~\citet{pmlr-v162-rame22a} introduce the concept of invariant gradient variances, which helps maintain generalization performance even for out-of-distribution settings. The work in~\citet{pezeshki2022understanding} further highlights the impact of Hessian alignment, showing that aligning the Hessians of the source and target domains can significantly boost generalization in gradient-based methods.}

\blue{\textbf{Challenges.} Applying gradient alignment techniques directly to FL presents several challenges: (1) \textit{Privacy leakage}—these methods can potentially compromise data privacy by necessitating access to the target domain from source domains; (2) \textit{Local overfitting}—clients train their models on local data, which can lead to overfitting within their specific domains, reducing the global model's generalization capabilities.}

\blue{\textbf{Our contribution.} To address the issues described above, we propose communication schemes that ensure data privacy by preventing unrestricted access to client data. Furthermore, our approach promotes global transferability by focusing on improving generalization across all clients rather than solely enhancing local domain performance.}


\subsection{\blue{Distinctions and Connections between Generalization and Transferability in FL}}
\label{subsec:apdx-lit-GFL}

\blue{Both approaches address the challenge of non-iid data,
and improving transferability may potentially enhance generalization across diverse local domains. However, they employ distinct models and evaluation datasets: (a) Generalization of FL targets performance on heterogeneous \textit{source testing datasets}, while transferable FL aims for strong performance on a \textit{target dataset} that may significantly differ from \textit{source training datasets}. (b) Heterogeneous FL utilizes the pretrained model for evaluation, whereas Transferable FL assesses the finetuned model.}

\blue{The methodologies also diverge. Although reducing cross-client variance is linked to better generalization, our method distinguishes itself from traditional heterogeneous FL by enforcing a large average Jacobian norm $|J_p|$ in the early stages. While a larger $|J_p|$ may hinder generalization due to increased local updates and model variance, it enhances transferability by preventing premature local convergence.}

\blue{Additionally, adversarially robust models offer an example where improving transferability may come at the expense of generalization. As shown in \citep{osti_10283431}, adversarially robust models tend to have better transferability. However, since these models are designed to perform well against adversarial examples or perturbations, they do not necessarily exhibit lower generalization error on clean test data. In this scenario, transferability can be unrelated to or even conflict with generalization.}

\newpage
\end{document}